\newtheorem{definition}{Definition}
\newtheorem{theorem}{Theorem}
\newtheorem{lemma}{Lemma}
\def\BibTeX{{\rm B\kern-.05em{\sc i\kern-.025em b}\kern-.08em
    T\kern-.1667em\lower.7ex\hbox{E}\kern-.125emX}}
\begin{document}
\title{Granular-Ball Fuzzy Set and Its Implementation in SVM}
\author{Shuyin Xia, Xiaoyu Lian, Guoyin Wang*, Xinbo Gao, Yabin Shao
\thanks{S. Xia, X. Lian, G. Wang, X. Gao and Y. Shao are with the Chongqing Key Laboratory of Computational Intelligence, Chongqing University of Telecommunications and Posts, 400065, Chongqing, China. E-mail: xiasy@cqupt.edu.cn, 1258852995@qq.com, shaoyb@cqupt.edu.cn.}}

% \markboth{Journal of \LaTeX\ Class Files,~Vol.~18, No.~9, September~2020} 
%{How to Use the IEEEtran \LaTeX \ Templates}Manuscript created October, 2020; This work was developed by the IEEE Publication Technology Department. This work is distributed under the \LaTeX \ Project Public License (LPPL) ( http://www.latex-project.org/ ) version 1.3. A copy of the LPPL, version 1.3, is included in the base \LaTeX \ documentation of all distributions of \LaTeX \ released 2003/12/01 or later. The opinions expressed here are entirely that of the author. No warranty is expressed or implied. User assumes all risk

\maketitle

\begin{abstract}
Most existing fuzzy set methods use points as their input, which is the finest granularity from the perspective of granular computing. Consequently, these methods are neither efficient nor robust to label noise. Therefore, we propose a frame-work called granular-ball fuzzy set by introducing granular-ball computing into fuzzy set. The computational framework is based on the granular-balls input rather than points; therefore, it is more efficient and robust than traditional fuzzy methods, and can be used in various fields of fuzzy data processing according to its extensibility. Furthermore, the framework is extended to the classifier fuzzy support vector machine (FSVM), to derive the granular ball fuzzy SVM (GBFSVM). The experimental results demonstrate the effectiveness and efficiency of GBFSVM. The source codes and data sets are available on the public link: http://www.cquptshuyinxia.com/GBFSVM.html.

\end{abstract}

\begin{IEEEkeywords}
Fuzzy set, granular-ball, SVM, granular computing, label noise.
\end{IEEEkeywords}

\section{Introduction}
\IEEEPARstart{I}{n} the practical world, there are numerous fuzzy phenomena or concepts in the objective world, such as big and small, light and heavy, fast and slow, dynamic and static, deep and shallow, beauty and ugliness, etc., which cannot be clearly and completely distinguished. In fact, fuzzy information is also reliable information. In order to quantitatively describe the objective laws of fuzzy concepts and fuzzy phenomena, Professor L.A. Zadeh, an American computer and cybernetics expert, put forward the important concept of fuzzy set \cite{lin2002fuzzy} in 1965. He used membership functions to represent fuzzy sets, which are functions of [0,1] closed intervals, to describe the degree to which elements belong to fuzzy sets. The greater the function value, the greater the degree of membership. Since Zadeh introduced fuzzy sets \cite{F3}, it has been applied to various fields such as control systems, pattern recognition, machine learning, etc, and its another branch, fuzzy rough set, has also been developed rapidly. Several scholars have conducted in-depth research in the direction of feature selection \cite{hu2011feature,lin2017streaming,sun2020feature,tan2018intuitionistic,wang2016fitting,wang2021feature}, clustering \cite{ding2021unsupervised}, decision making \cite{selvachandran2019new, zhan2021novel} , classification \cite{hu2006fuzzy} and so on.

\begin{figure}[!ht]
	\centering
	{\includegraphics[width = 0.25\textwidth]{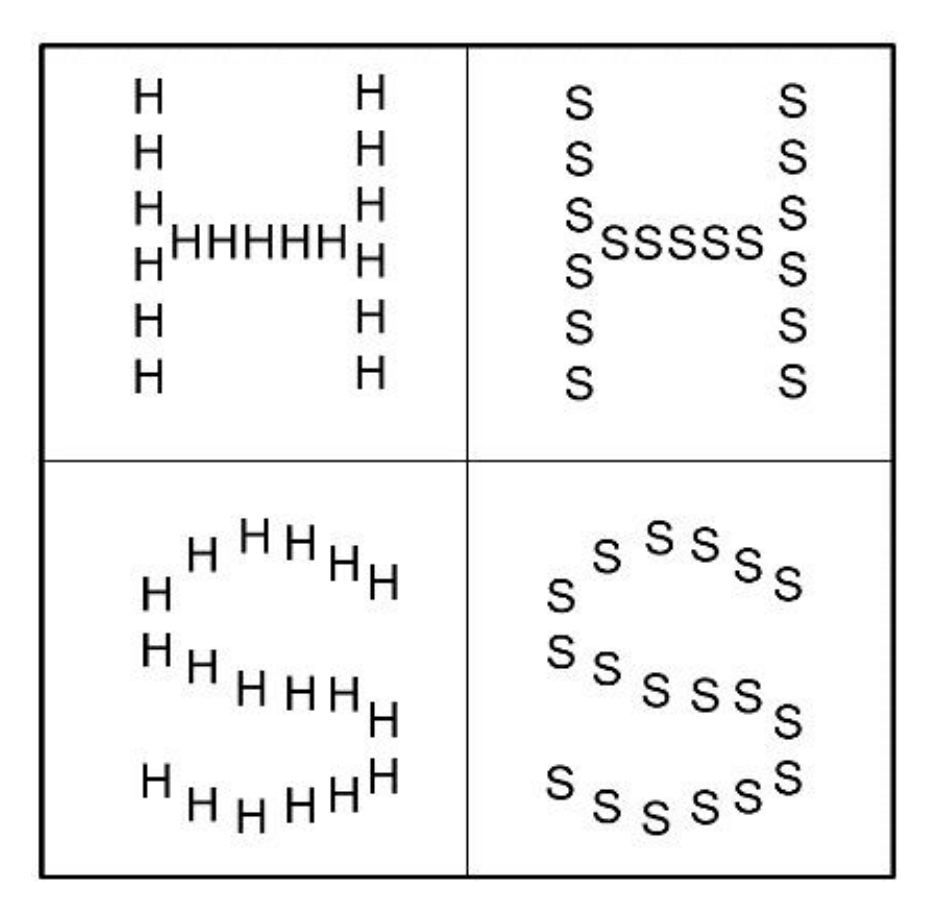}}
	\caption{Human cognition the coarse-grained large range is preferred.}
	\label{Humancognition}
\end{figure}

Considering the classification problem of fuzzy data sets, Lin et al. \cite{lin2002fuzzy} proposed a fuzzy support vector machine (FSVM) model by applying fuzzy membership to each input point. The model can make full use of the sample information, however, the complexity of the training stage is still high for a large number of data classification problems. For the research on fuzzy set classification tasks in the field of machine learning, Aydogan et al. \cite{aydogan2012hga} proposed a hybrid heuristic method based on the genetic algorithm (GA) and integer programming formula (IPF) to solve the high-dimensional classification problem in the classification system of linguistic fuzzy rules. The method can find accurate and concise classification rules, but can not flexibly consider the number of rule sets generated in the classification. Sanz et al. \cite{sanz2021wrapper} directly learned interval-valued fuzzy rules by defining a packaging method to obtain a classification system based on the interval valued fuzzy principle. Compared with the existing algorithm at that time, the accuracy of this method has been significantly improved, but the unbalanced classification problem can not be well tested. The algorithm is inefficient owing to its two evolutionary processes. Li et al. \cite{F4} proposed an interval extreme learning machine for interval fuzzy set classification of continuous-valued attributes, in which the discretization of conditional attributes and fuzzification of class labels are considered. Recently, an associative fuzzy classifier called CFM-BD \cite{F7} was been developed, which has shown robust predictive performance against more complex algorithms such as fuzzy decision trees \cite{F8}. To simplify the rule set, Aghaeipoor et al. \cite{F9} proposed a new scalable fuzzy classifier for big data, namely Chi-BD-DRF, which added the method of "dynamic rule filtering (DRF)" to supplement fuzzy big data learning.

\begin{figure}[!ht]
	\centering
	{\includegraphics[width = 0.50\textwidth]{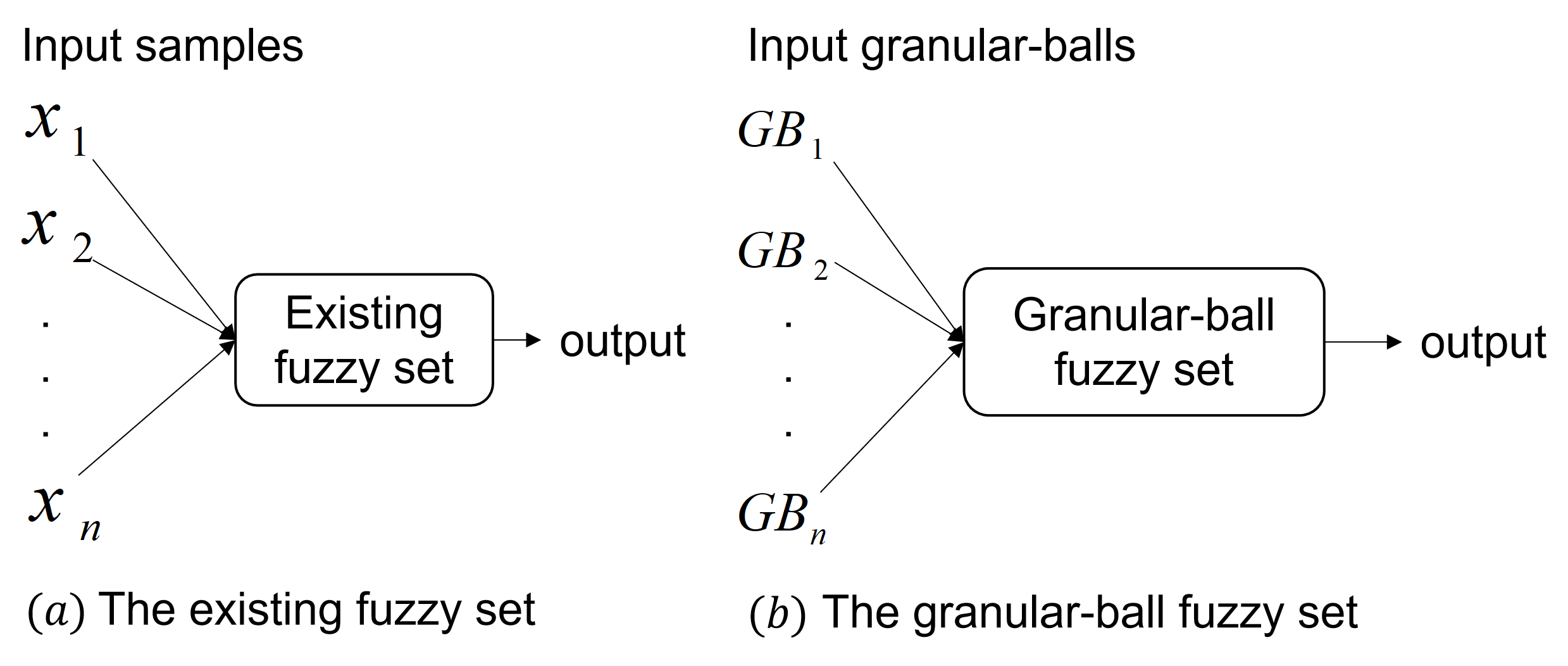}}
	\caption{The existing 
		fuzzy set and the comparison of the granular-ball fuzzy set.}
	\label{input}
\end{figure}

The aforementioned mentioned processing methods are based on the finest granularity from the perspective of granular computing \cite{G1,G2}, as shown in Fig. \ref{input}(a), therefore, it is not efficient and robust. Human cognition has the rule of "large scope first," and the visual system is particularly sensitive to the global topological characteristics, from large to small, from coarse-grained to fine-grained as shown in Fig. \ref{Humancognition} \cite{xia2022efficient}. In granular computing, the larger the granularity size, the higher the efficiency and the better the robustness to noise. However, this is also more likely to lead to a lack of detail and loss of accuracy. Smaller granularity allows more attention to detail, but may reduce the efficiency and robustness to label noise. In the past decades, scholars worldwide have been constantly studying \cite{G3,G5,G6,G7}, who granulate huge amounts of data and knowledge into different granularities according to different tasks. The relationship between these granularities was then used to solve this problem \cite{G8,G10,G11,yang2020granular}. Selecting different granularities according to various scenarios can improve the performance of multi-granularity learning methods and solve practical problems \cite{G14,G16,G17}. Therefore, Xia et al.\cite{G15} proposed granular-ball classifiers using some hyper-balls to granulate the dataset into different sizes of granular-balls \cite{G12}. The granular-ball support vector machine (GBSVM) \cite{xia2022gbsvm} is further proposed, and exhibits higher accuracy and efficiency than the traditional SVM. In order to improve the efficiency of fuzzy data processing, the idea of granular-ball computing can be introduced into fuzzy data processing by defining the fuzzy granular-ball, as shown in \ref{input}(b). The concept of fuzzy granular-balls was briefly proposed in our previous work \cite{shao}, but its algorithm is not designed; besides, its SVM model is incorrect \cite{shao,xue2021dual}, too complex and not consistent with the SVM. In order to improve the efficiency and robustness of fuzzy classifiers by combining granular-ball computing, the main contributions of the paper are as follows:

% it is not used a general framework andclassifiers

\begin{itemize}
	\item[$\bullet$] We propose a framework called the granular-ball fuzzy set by introducing the concept of the fuzzy granular-ball. It is different from the traditional fuzzy data processing method. 
	\item[$\bullet$] GBFSVM is proposed based on the fuzzy granular-ball framework. The framework uses granular-balls as the basic analysis unit instead of data points.  
	
	\item[$\bullet$] Considering the classification problem with the characteristics of triangular fuzzy numbers, the GBFSVM based on triangular fuzzy numbers is derived in detail using the possibility measure theory.
	
	\item[$\bullet$] Particle swarm optimization (PSO) is used to solve the dual model of GBSVM. Experimental results indicate that GBFSVM performs better than the traditional SVM and FSVM both in robustness and effectiveness.

\end{itemize}

The rest of this paper is organized as follows: we introduce the concepts of fuzzy sets and the work related to granular-ball computing \ref{sec:related work}. Section \ref{sec3} details the granular-ball fuzzy set framework and the definition of fuzzy granular-ball. Section \ref{sec4} introduces the application of granular-ball fuzzy set in fuzzy support vector machines and support vector machines based on triangular fuzzy numbers. The experimental results and analysis are presented in Section \ref{sec:experiment}. Finally, some concluding remarks are given in Section \ref{sec6}.

\section{Related Work}\label{sec:related work}
\subsection{Related concepts of fuzzy sets}
\noindent With the development of modern science and technology, the system we are facing is becoming more and more complex. For complex problems in the fields of humanities, social sciences and other "soft sciences," it is often difficult to provide an accurate evaluation owing to insufficient cognition or information content in the decision-making process. For multi-attribute decision making without specific decision information, it is difficult for decision makers to accurately evaluate the scheme, thus, the concept of the fuzzy set is generated. This concept is as follows:
\begin{definition}\rm{\textbf{(Fuzzy set \cite{F10})}}\label{def:Fuzzy set}
	\setlength{\parindent}{2em}If $X$ is a collection of objects denoted generically by $x$, then a fuzzy set $\tilde{A}$ in $X$ is a set of ordered pairs:
	\begin{equation}\label{de2.1}
	\tilde{A}= \left\{x, \mu_{\tilde{A}} (x) | x \in X \right\}. 
	\end{equation}
	\setlength{\parindent}{2em}$\mu_{\tilde{A}}(x)$ is called the membership function (generalized characteristic function) which maps $X$ to the membership space $M$. Generally speaking, the range of the membership function is $[0,1]$.
\end{definition}

The most important role of fuzzy sets is to represent various uncertainties in the data and data processing. In particular, the introduction of fuzzy sets in big data improves the representation ability of the information samples.

In particular, the triangular fuzzy number is the concept of the fuzzy set proposed by Professor Lotfi A. Zadeh in 1965 in order to solve these problems in an uncertain environment. The concept of triangular fuzzy number is as follows:
\begin{definition}\rm{\textbf{(Triangular fuzzy number \cite{F12})}}\label{def:Triangular fuzzy number}
	Suppose $\tilde{a}$ is a triangular fuzzy number, when its membership function is expressed as follows:
	\begin{equation}\label{de2.2}
	\mu_{\tilde{a}}(x)=
	\left\{\begin{aligned}
	&\frac{x - r_1}{r_2 - r_1},\quad\quad r_1 \leq x\textless r_2,\\
	&1,\quad\quad \quad\quad\quad x = a ,\\
	&\frac{x-r_3}{r_2-r_3},\quad \quad r_2 \textless x \leq r_3,\\
	\end{aligned}\right.
	\end{equation}
	where $r_1 \leq r_2 \leq r_3,r_j \in \mathbb{R}(j=1,2,3)$ and $\tilde{a}$ is called a triangular fuzzy number, denoted by $\tilde{a}=(r_1,r_2,r_3)$. The real numbers $r_2$,  $r_1$ and $r_3$ are called the center, left and right endpoints of the triangular fuzzy number $\tilde{a}$, respectively.
	The center reflects the main position of the triangular fuzzy number, and the real number $a$ can be expressed as a special triangular fuzzy number $a=(a,a,a)$.
\end{definition}
The probability of occurrence of a fuzzy event ${\tilde{a}}$ can be measured using the possibility measure. This possibility measure was proposed by Professor Lotfi A. Zadeh in 1978. It is defined as follows:
\begin{definition}\rm{\textbf{(Possibility measure \cite{F11})}}\label{def:Possibility measure}
	\setlength{\parindent}{2em}Let ($\Gamma$, $\mathcal{A}$) be a backup domain space, and $\rm{Pos}$ is a set function defined on the backup domain $\mathcal{A}$. If $\rm{Pos}$ satisfies the following conditions:\\
	(1) ${\rm{Pos}}(\emptyset)=0$, and ${\rm{Pos}}(\Gamma)=1$;\\
	(2) For any subclass $\left\{A_i | i \in I\right\}$ of $\mathcal{A}$, there are ${\rm{Pos}}\left(\underset {i \in I} \bigcup A_i\right)= \underset {i \in I} \sup \ {\rm{Pos}}(A_i)$. This is called a possibility measure, and the triples ($\Gamma ,  \mathcal{A}, {\rm{Pos}}$) are called possibility spaces.
\end{definition}

When the triangular fuzzy number is used to represent the fuzzy event $a$, the likelihood of the fuzzy event is measured as follows:
\begin{definition} \rm{\textbf{\cite{F14}}} %[$LDL^T$ Factorization \cite{GoVa13}]\label{thm:bigthm}
	\setlength{\parindent}{2em} Let ${\tilde{a}}=(r_1,r_2,r_3)$ be a triangular fuzzy number, then:
	
	\begin{equation}\label{de2.3}
	{\rm{Pos}}({\tilde{a}}\leq 0)=
	\left\{\begin{aligned}
	&1,\quad\quad \quad\quad\quad r_2 \leq 0, \\
	&\frac{r_1}{r_1 - r_2},\quad\quad r_1 \leq 0, r_2 \textgreater 0,\\
	&0,\quad\quad \quad\quad\quad r_1 \textgreater 0. \\
	\end{aligned}\right.
	\end{equation}
\end{definition}

\begin{lemma} \rm{\textbf{\cite{F13}}} \label{de2.4}
	\setlength{\parindent}{2em} Let ${\tilde{a}}=(r_1,r_2,r_3)$ be a triangular fuzzy number, then for any given confidence level $\lambda\left(0 \textless \lambda \leq 1\right)$, we have:\\
	\begin{equation} 
	{\rm{Pos}}\left\{{\tilde{a}}\leq 0\right\}\geq\lambda \Leftrightarrow (1-\lambda)r_1+\lambda r_2 \leq 0.
	\end{equation}
\end{lemma}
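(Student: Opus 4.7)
The plan is to prove the equivalence by doing a case analysis driven by the three branches of the piecewise definition of $\mathrm{Pos}(\tilde{a}\leq 0)$ given in Definition~\ref{def:Possibility measure} (the triangular-fuzzy-number possibility formula, Eq.~\eqref{de2.3}). In each branch I want to show that the condition $\mathrm{Pos}\{\tilde{a}\leq 0\}\geq\lambda$ holds if and only if $(1-\lambda)r_1+\lambda r_2\leq 0$, using only the sign constraints $r_1\leq r_2\leq r_3$ and $0<\lambda\leq 1$.

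First I would handle the trivial cases. If $r_2\leq 0$, then $\mathrm{Pos}(\tilde{a}\leq 0)=1\geq\lambda$ is automatic, and on the algebraic side $(1-\lambda)r_1+\lambda r_2$ is a convex combination of two nonpositive numbers (using $r_1\leq r_2\leq 0$ and $\lambda\in(0,1]$), hence nonpositive — so both sides are true. If $r_1>0$, then $\mathrm{Pos}(\tilde{a}\leq 0)=0<\lambda$, and since $r_2\geq r_1>0$ the convex combination $(1-\lambda)r_1+\lambda r_2\geq r_1>0$ — so both sides are false. In these two regimes the biconditional degenerates.

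The substantive step is the middle case $r_1\leq 0<r_2$. Here $\mathrm{Pos}(\tilde{a}\leq 0)=\dfrac{r_1}{r_1-r_2}=\dfrac{-r_1}{r_2-r_1}$, and crucially $r_2-r_1>0$, so I can cross-multiply without flipping the inequality. The chain
\begin{equation*}
\frac{-r_1}{r_2-r_1}\geq\lambda \;\Longleftrightarrow\; -r_1\geq\lambda(r_2-r_1) \;\Longleftrightarrow\; (1-\lambda)r_1+\lambda r_2\leq 0
\end{equation*}
finishes the equivalence.

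The only mild obstacle is bookkeeping: one must verify that the partition induced by the definition of $\mathrm{Pos}$ is exhaustive and that the boundary value $r_2=0$ (where two branches meet) is handled consistently, but this is immediate since at $r_2=0$ the formula $\tfrac{r_1}{r_1-r_2}=1$ agrees with the first branch. No additional machinery beyond Definition~\ref{def:Possibility measure} and elementary inequalities is needed.
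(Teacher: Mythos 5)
Your proof is correct. The paper itself gives no proof of this lemma --- it is imported verbatim from the cited reference \cite{F13} --- so there is nothing in the text to compare against; your argument simply supplies the standard derivation from the preceding possibility formula for triangular fuzzy numbers. The case split exactly mirrors the three branches of Eq.~(\ref{de2.3}), the two degenerate branches are dispatched correctly (convex combination of nonpositive numbers when $r_2\leq 0$; the bound $(1-\lambda)r_1+\lambda r_2\geq r_1>0$ when $r_1>0$, which uses $0<\lambda\leq 1$ in the right places), and the cross-multiplication in the middle branch is legitimate because $r_2-r_1>0$ there. The only cosmetic quibble is your boundary remark: at $r_1=r_2=0$ the expression $r_1/(r_1-r_2)$ is $0/0$ rather than $1$, but this is irrelevant since the definition assigns that point to the first branch; the partition is already exhaustive and disjoint given $r_1\leq r_2$, so no consistency check is actually needed.
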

%%要加引用文章

SVM is a powerful tool for solving classification problems, however, this theory still has some limitations. All training points of the same class are uniformly treated using SVM theory. In various real-world applications, the effects of training points are different, and all have an ambiguous membership relationship. Specifically, each training point no longer belongs to one of these two classes entirely. Whereas the parameter $\xi$ is a measure of the error in SVM, and FSVM considers adding different weights(i.e., membership degree ${\delta}_i$) to the error \cite{lin2002fuzzy}, and its model is as follows:
\begin{equation}\label{}
	\left\{\begin{aligned}
		&{\rm {min}} \ \ \ \frac{1}{2}\Vert w \Vert^2 + C \sum_{i=1}^l {\delta}_i \xi_i, \\
		&s.t. \quad y_i(wx_i+b)  \textgreater 1 - \xi_i , \\
		&\quad\quad\ \  \xi_i \geq 0,i=1,2,...,n.
	\end{aligned}\right.
\end{equation}

\subsection{Granular-ball Computing}

Granular-ball computing is a big data processing method proposed by Wang and Xia to meet the scalability of high-dimensional data \cite{G15}. The core idea of granular-ball computing involves using the hyper ball to cover all or part of the sample space, and use "granular-ball" as the input to represent the sample space, so as to achieve multi-granularity learning characteristics and accurate characterization of the sample space. The great advantage of this method is that it only needs two data representations of center and radius in any dimension.

In any dimensional space $R^d$, each granular-ball can be described by two parameters i.e. center $c$ and radius $r$. The detailed definition is as follows:

\begin{definition}\rm{\textbf{\cite{G15}}}
	Given a data set $D= \{x_1,x_2,...,x_n \}  \in \mathbb{R}^d$, the center $c$ of a granular-ball is the center of gravity for all sample points in the ball, and $r$ is equal to the average distance from all points in the granular-ball to $c$. Specifically, we have:
	$$
	c = \frac{1}{n} \sum_{i = 1}^{n} x_i; \ \ \ r = \frac{1}{n} \sum_{i = 1}^{n}  \left| x_i - c  \right|.
	$$
\end{definition}

The radius $r$ is defined as the average distance rather than the maximum distance. The balls generated with the average distance are not easily affected by the outlier sample and better fit the data distribution. The label of a granular-ball is defined as the label with the most appearances in a granular-ball. To quantitatively analyze the mass of the split granular-ball, the concept of "purity threshold" is proposed, and it is defined as the percentage of majority samples with the same label in a granular-ball.

\begin{figure}[!t]
	\centering
	\includegraphics[width=3.6in]{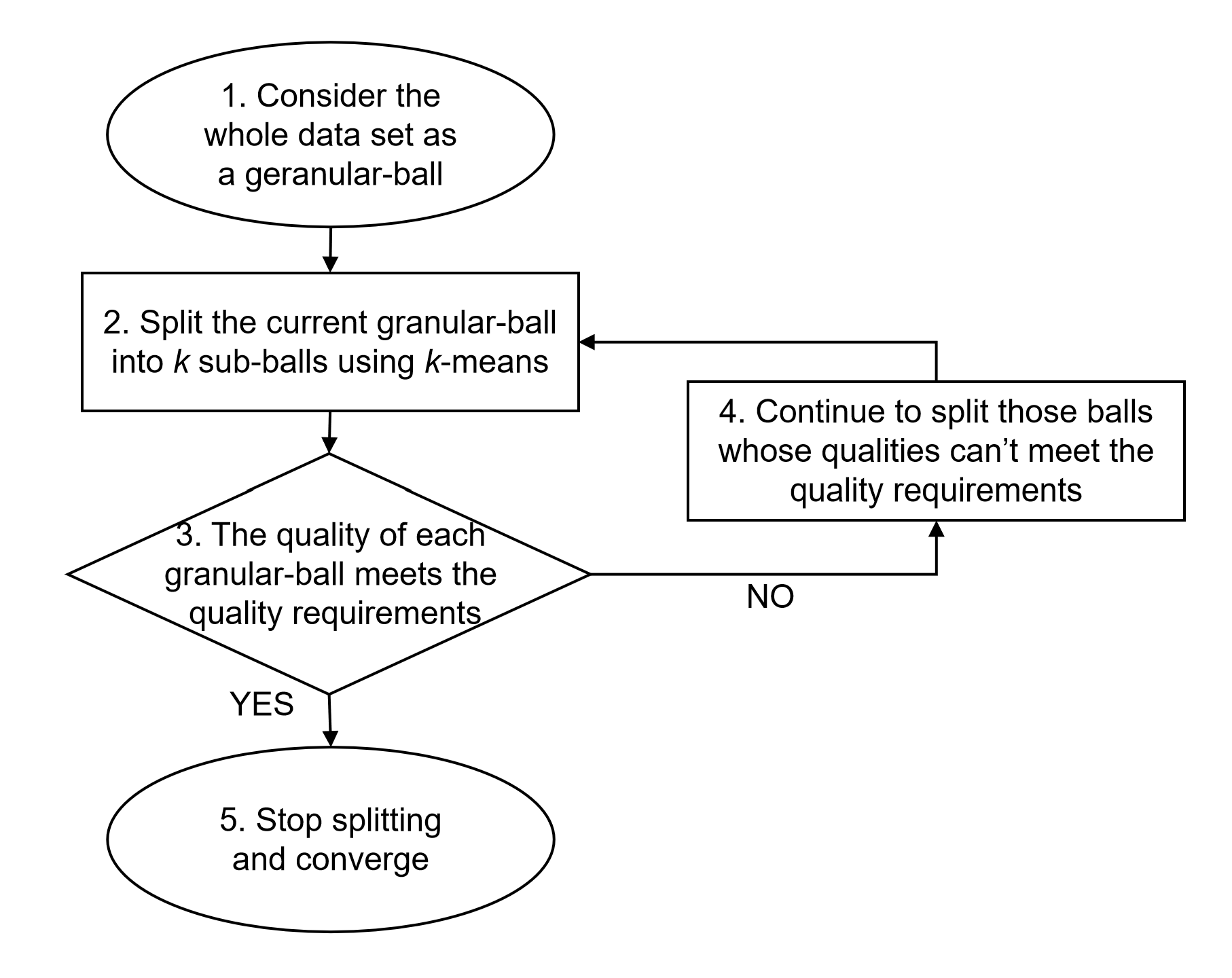}
	\caption{Process of the existing granular-ball generation in granular-ball computing.}
	\label{figure:GB}
\end{figure} 

Given the training set $D=\{x_i,i=1,2,...,n\}$, taking the reciprocal of the granular-ball covering is taken to optimize its minimum value. The optimization goal of the granular-balls can be expressed as:
\begin{equation}\label{GBmodel}
	\left\{\begin{aligned}
		&{\mathop{\min}  } \ \ {\lambda_1} * \frac{n}{\sum_{j = 1}^{m}  \left| GB_{j} \right|} + {\lambda_2} * m, \\
		&s.t. \quad quality (GB_{j}) \geq T, j=1,2,...,m, \\
	\end{aligned}\right.
\end{equation}
where $\lambda_1$ and $\lambda_2$ are the corresponding weight coefficients and $T$ is the purity threshold. $m$ represents the number of granular-balls, $GB$, and $m < n$ \cite{xia2022efficient}. The existing granular-ball splitting method currently uses the efficient $k$-means method ($k$ is the number of labels in a certain ball) to ensure the efficiency of the granular-ball classification process. Fig. \ref{figure:GB} is a heuristic algorithm to solve the model (\ref{GBmodel}). The dataset as a whole can be considered as a granular-ball at the beginning, as shown in Fig. \ref{generateGB}(a). At this instant, the quality of the granular-ball is the worst, and it cannot describe the distribution characteristics of the data. Therefore the granular-ball needs to be further split, and for each split, it was necessary to count the number of labels for the different categories in the granular-balls. The ball will continue to be divided and the purity of the granular-ball increases, as shown in Fig. \ref{generateGB} (b)-(d). When the purity of all the granular-balls meets the requirement, the algorithm converges result as shown in Fig. \ref{generateGB} (e). The resulting granular-balls are shown in Fig. \ref{generateGB} (f), which shows that granular-ball computing can well describe the distribution of data.

\begin{figure}[!t]
	\centering
	\includegraphics[width=1.7in]{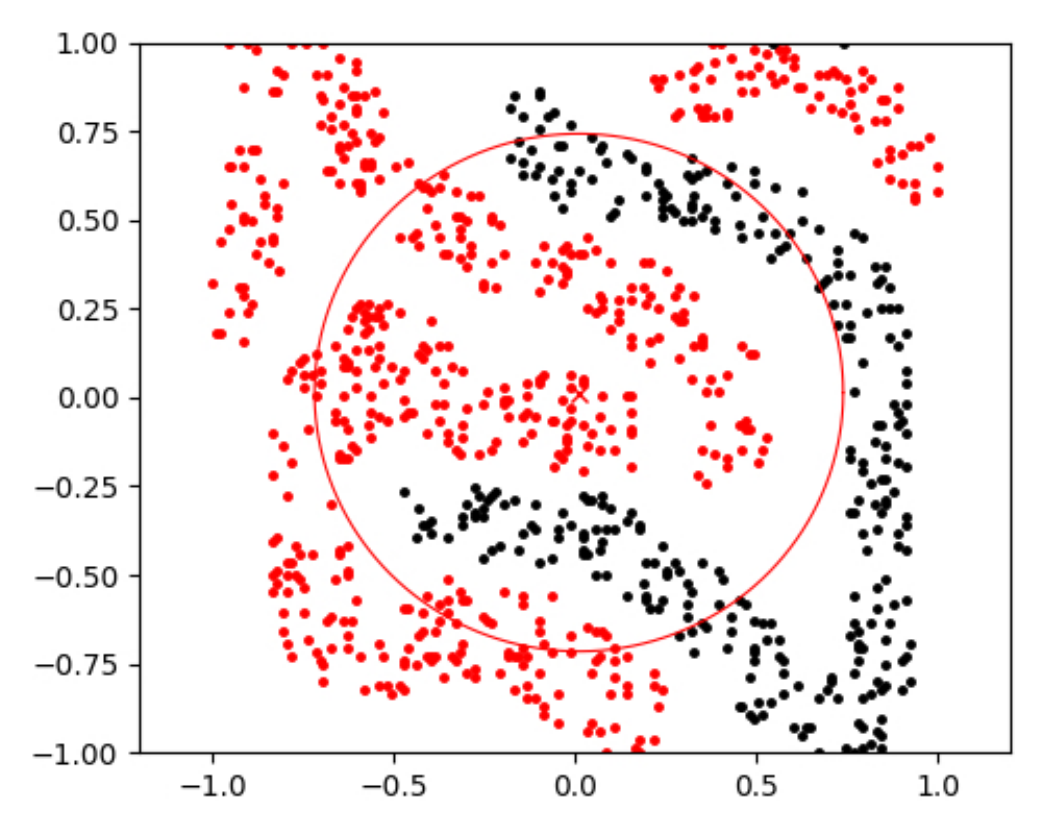}
	\includegraphics[width=1.7in]{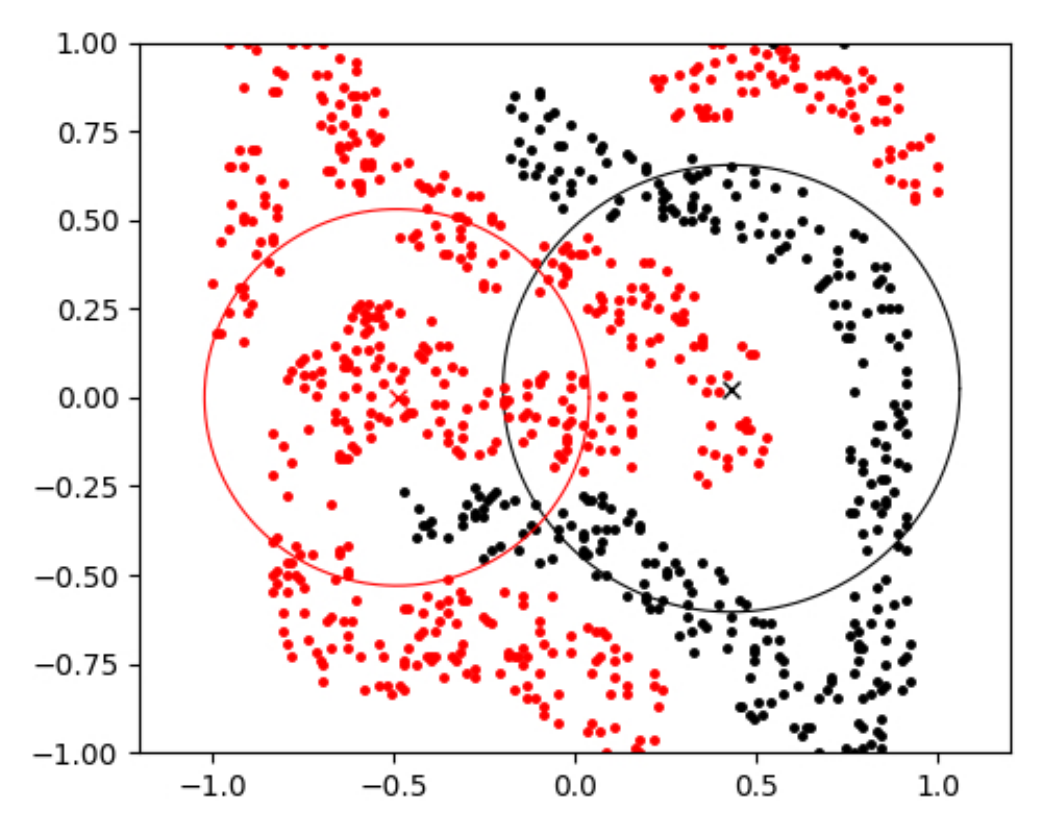}\\
	(a) \ \ \ \ \ \ \ \ \ \ \ \ \ \ \ \ \ \ \ \ \ \ \ \ \ \ \ \ (b)\\
	\includegraphics[width=1.7in]{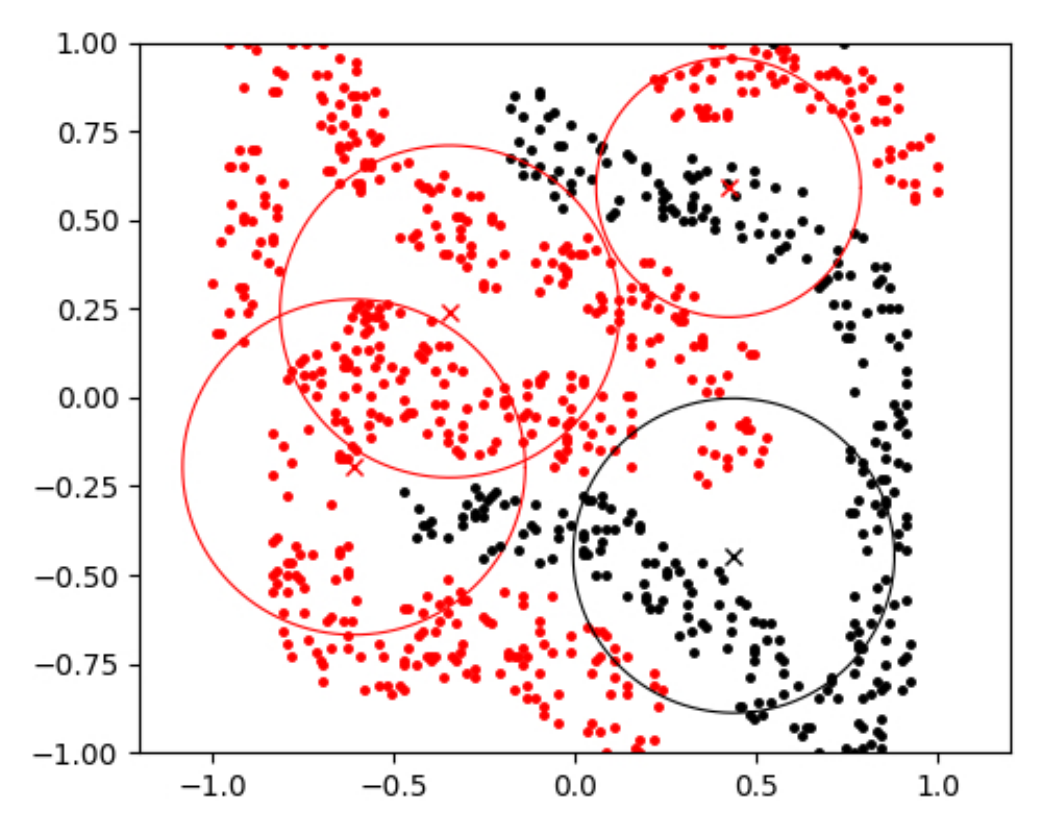}
	\includegraphics[width=1.7in]{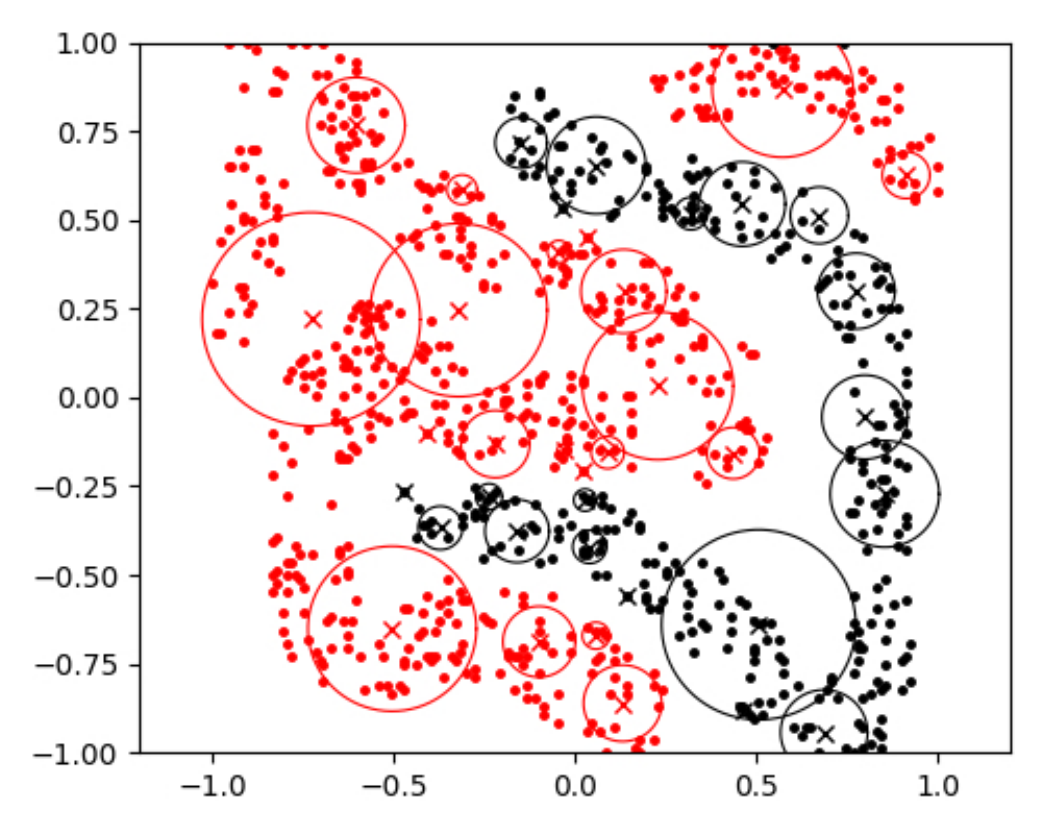}\\
	(c) \ \ \ \ \ \ \ \ \ \ \ \ \ \ \ \ \ \ \ \ \ \ \ \ \ \ \ \ (d)\\
	\includegraphics[width=1.7in]{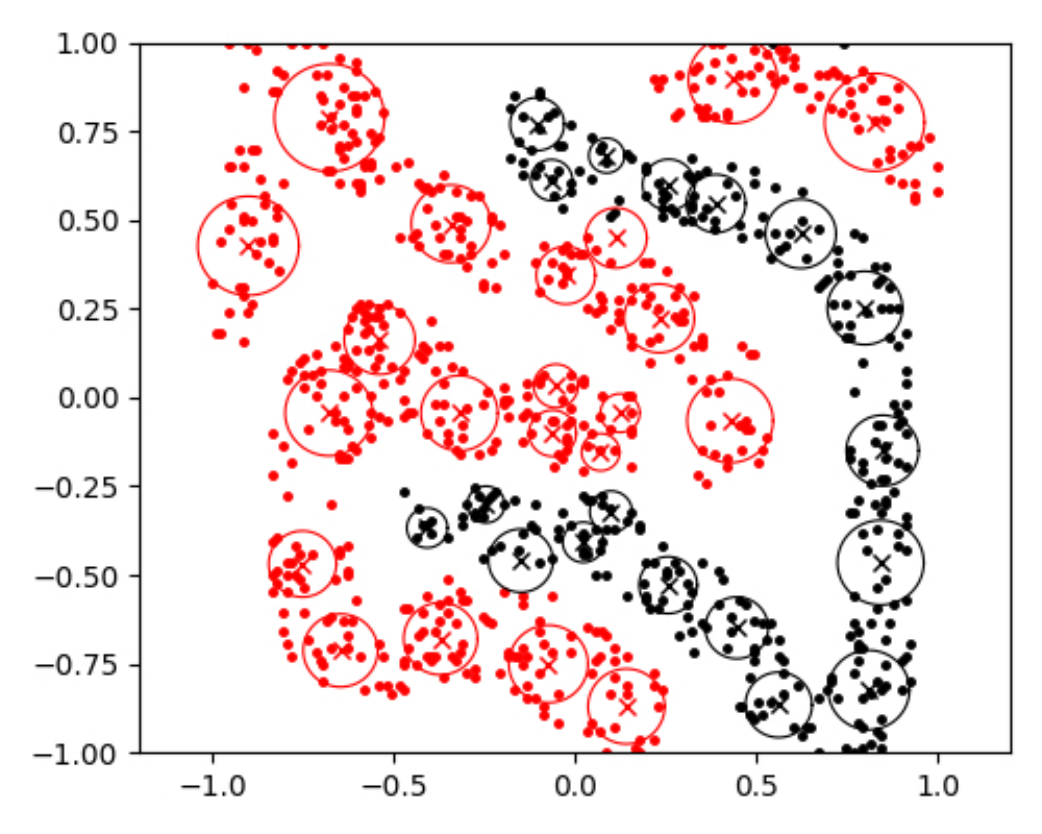}
	\includegraphics[width=1.7in]{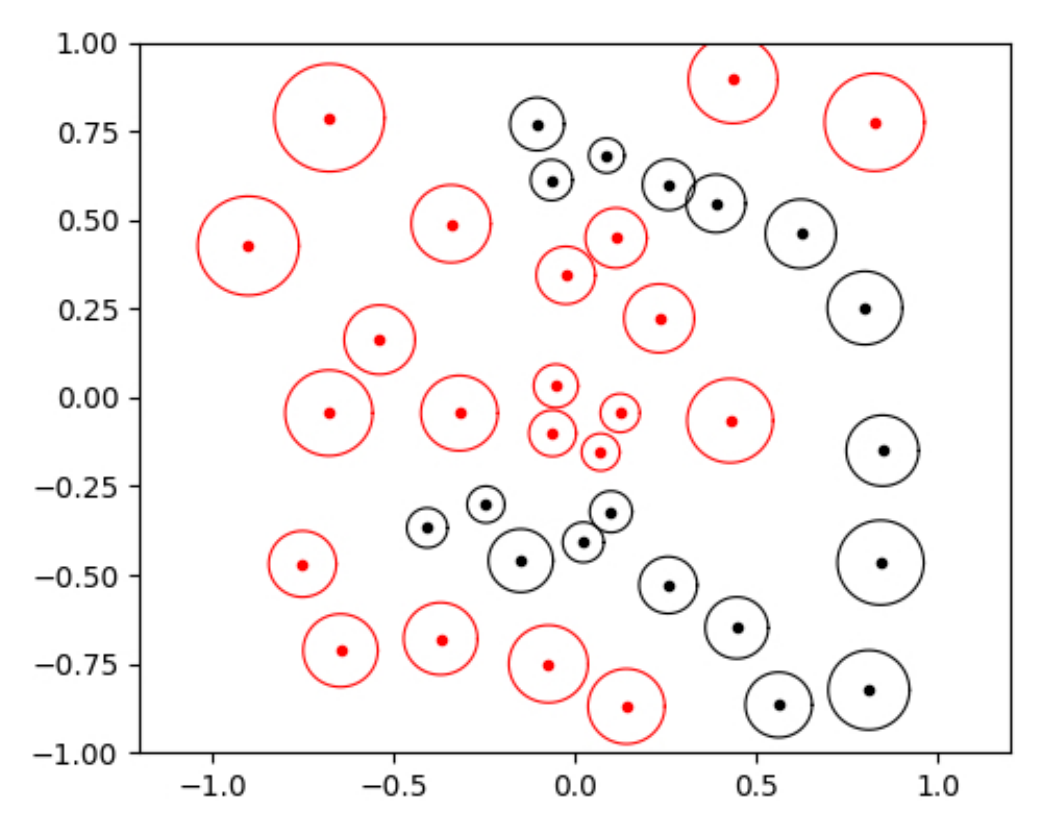}\\
	(e) \ \ \ \ \ \ \ \ \ \ \ \ \ \ \ \ \ \ \ \ \ \ \ \ \ \ \ \ (f)\\
	\caption{The granular-ball splitting generation process of the existing method
		on the data set fourclass. The colors of the two granular-balls in the figure
		(corresponding to the two-sample point colors) respectively represent the two
		types of category labels. (a) The initial granular-ball, the whole data set can
		be seen as a granular-ball to participate in subsequent iterations; (b) granular-balls generated in the first iteration; (c) granular-balls generated in the second iteration; (d) Stop splitting results; (e) Results after stopping splitting; (f)	granular-balls extracted.}
	\label{generateGB}
\end{figure}

\subsection{Granular-ball SVM}
SVM is one of the most classical and popular classification algorithms in machine learning in recent decades. It is a generalized linear classifier, which classifies data using supervised learning. The SVM classifier takes points as input and divides the data into two categories by finding a classification line (two-dimensional is a straight line, three-dimensional is a plane, and multi-dimensional is a hyperplane). If the points are taken as the input, there will be a large amount of calculation and low robustness. Considering granule balls as input, the basic model of granular ball support vector machine (GBSVM) is derived \cite{xia2022gbsvm}, and its main principle is shown in Fig. \ref{figure:GB}. Given a dataset $D = \left\{(x_i, y_i),i = 1, 2, . . ., n \right\}$, where $ y_i \in \left\{+1,-1\right\}$ denotes the label of $x_i$. In Fig. \ref{Fig:GBSVM}, the two colors represent the two types of granular-balls, which are labeled "+1" and "-1". The set of generated granular-balls is denoted by $G = \{((c_i,r_i),y_i), i = 1, 2, . . ., n\}$ as the input, where $c$ and $r$ represent the center and radius, respectively.   
 \begin{figure}[!ht]
 	\centering
 	{\includegraphics[width = 0.48\textwidth]{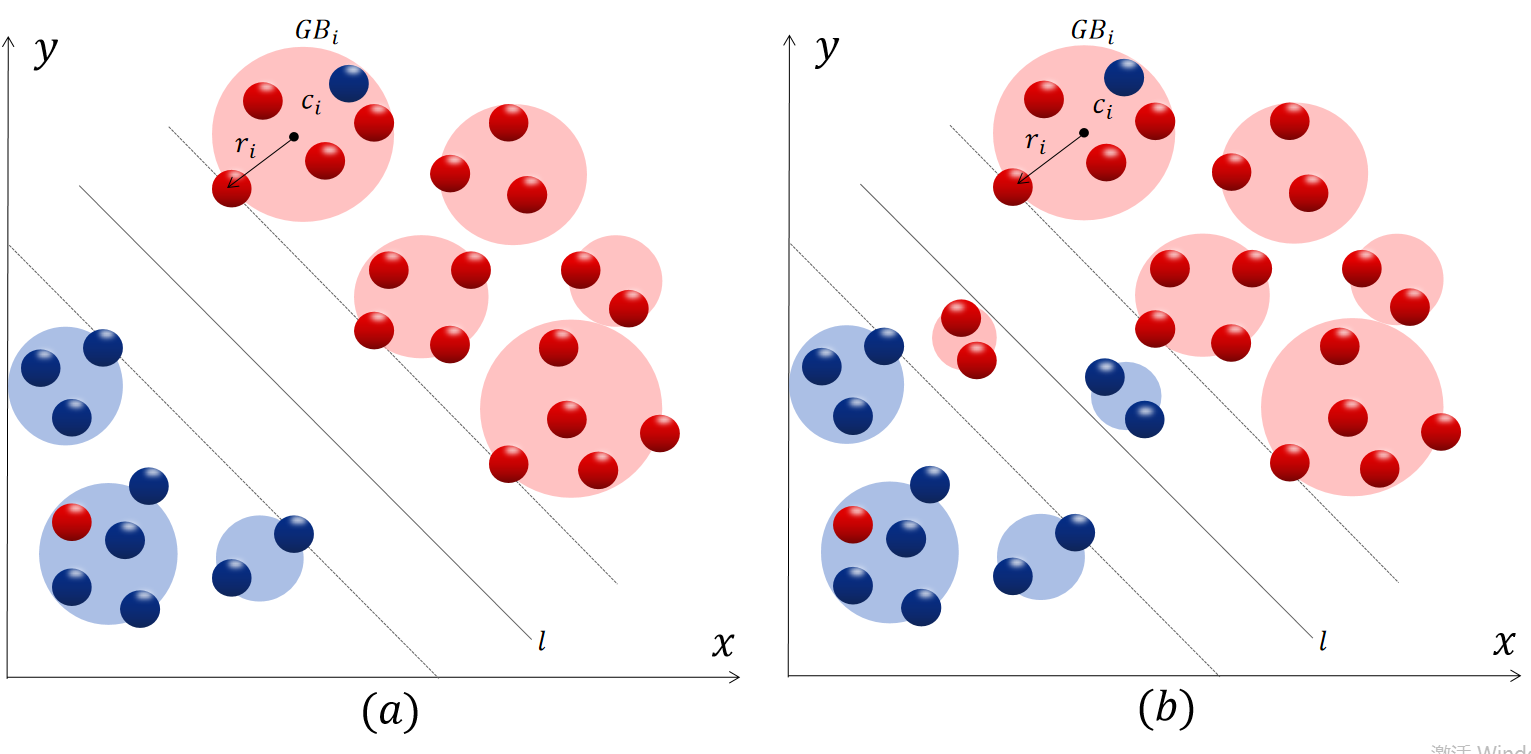}}
 	\caption{Schematic diagram of the granular-ball support vector machine. (a) The separable GBSVM; (b) The
 		inseparable GBSVM.}
 	\label{Fig:GBSVM}
 \end{figure} 

Taking the ball as input has two main advantages: Firstly, taking the ball as input significantly reduces the number of samples input, so as to improve the efficiency of training. Secondly, as shown in Fig. \ref{Fig:GBSVM}, since the overall label of a granular ball is the label with the most appearances in that granular ball, a small number of samples do not affect the label of the whole granular-ball, so the granular-ball algorithm is more robust.

By maximizing the interval and formalizing it into convex quadratic programming, the objective function of the separable GBSVM in separable classification can be obtained as follows:
\begin{equation}\label{}
\left\{\begin{aligned}
&{\rm {min}} \ \ \ \frac{1}{2}\Vert w \Vert^2, \\
&s.t. \quad y_i(wc_i+b)- \Vert w \Vert {r_i} \textgreater 1 , i=1,2,...,n, \\
\end{aligned}\right.
\end{equation}
where $b$ is the bias of the decision plane and $w$ denotes the normal vector of the decision plane.
 
In the separable GBSVM, all supported granular-balls must satisfy the constraints. However, it is necessary to introduce the slack variable $\xi$ and the penalty coefficient $C$, since some granular-balls do not satisfy the constraint in most cases. Therefore, the inseparable GBSVM model can be expressed as:
%\begin{eqnarray}\label{}
%&&l:y_i(w \cdot c_i + b)- \Vert w \Vert {r_i} =1-\xi_i^\prime\notag\\
%&& \Rightarrow \quad w \cdot c_i=y_i -y_i\xi_i^\prime + y_i \Vert w \Vert {r_i} -b
%\end{eqnarray}
%And the non-separable model can be defined by:
\begin{equation}\label{}
\left\{\begin{aligned}
&{\rm {min}} \ \ \ \frac{1}{2}\Vert w \Vert^2 + C \sum_{i=1}^l \xi_i, \\
&s.t. \quad y_i(wc_i+b)- \Vert w \Vert {r_i} \textgreater 1 - \xi_i , \\
&\quad\quad\ \  \xi_i \geq 0,i=1,2,...,n.
\end{aligned}\right.
\end{equation}

GBSVM can be robust to noise without using any other techniques, since the coarse granularity of a granular-ball can eliminate the effect of the minimum granularity of the label noise points as shown in Fig. \ref{Fig:GBSVM}.

\section{Granular-ball fuzzy set}\label{sec3}
\subsection{Motivation}

Most of the existing data processing methods use the finest information granularity as input, which is computationally inefficient. GBSVM combined with granular-ball computing and obtained more efficient and robust results. However, this method was not introduced to the fuzzy set. Classical FSVM was proposed by introducing the fuzzy membership of each point in the training set. The fuzzy membership $\delta_i$ is the attitude of the corresponding point $x_i$ toward one class and the parameter $\xi_i$ is a measure of error. FSVM was considered to add different membership weights to the loss terms, but it still used the most fine-grained sample points as input \cite{lin2002fuzzy}, as shown in Fig. \ref{FSVM}(a). The time cost is high for large sample data. In addition, FSVM that takes points as input is not robust to label noise. Granular-ball computing can be considered in the fuzzy classifiers, and the simple principle is shown in Fig. \ref{FSVM}(b). In this study, we propose a framework called granular-ball fuzzy set to address these issues, and the framework can be applied to all directions of fuzzy data processing.

\begin{figure}[!ht]
	\centering
	{\includegraphics[width = 0.48\textwidth]{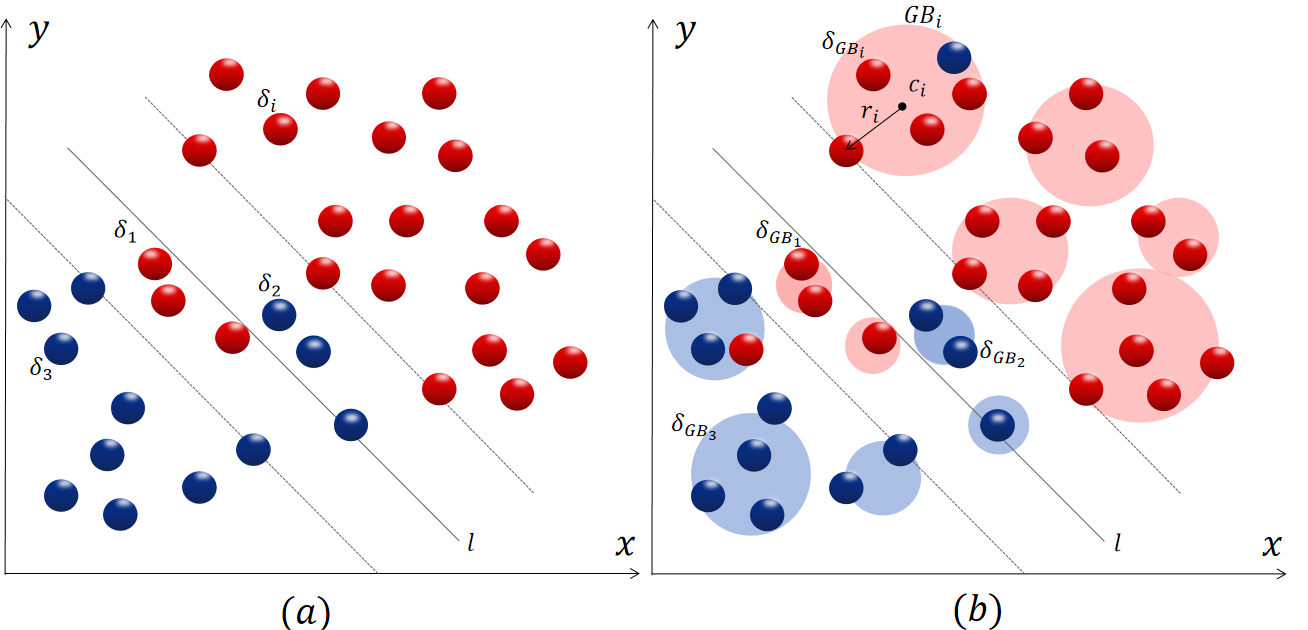}}
	\caption{Schematic diagrams of support vector machine in fuzzy set($\delta_i$ denotes the membership degree). (a) Traditional FSVM; (b) GBFSVM. }
	\label{FSVM}
\end{figure}

\subsection{Fuzzy granular-ball computing framework}

Inspired by the granular-ball computing, we believe that the traditional fuzzy data processing structure with points as input can be converted to a new structure incorporating granular-ball fuzzy set. Given a fuzzy dataset $D=\{(x_1, \delta_1),(x_2,\delta_2),...,(x_n,\delta_n)\}$, where $\delta_i$ is the degree of membership. In the new structure, different sizes of granular-balls are generated computationally, so that the dataset is reformulated as:
$$
G = \{(GB_1,\delta_{GB_1}),(GB_2,\delta_{GB_2}),...,(GB_l,\delta_{GB_l})\},
$$
where $l$ and $\delta_{GB_1}$ represent the number and membership degree of fuzzy granular-ball. Each granular-ball, $GB_i$, can be represented by its center $c_i$ and radius $r_i$. Assuming that the fuzzy data processing method is a function $f(x_i,\delta_i)$, the computing framework of fuzzy granular-ball is to transform the points into fuzzy granular-balls as input, which is expressed as:
$$
f(x_i,\delta_i) \rightarrow  f(GB_i,\delta_{GB_i}).
$$
In this computational framework, the input becomes a general description of each sub-dataset of different sizes. Taking balls as input reduces the number of input samples, thus greatly improving the efficiency of training. In addition, the size of fuzzy granular-ball can be adjusted according to the specific situation. Coarse granularity balls may not lead to lower accuracy, because larger coarse granularity balls can reduce the impact of noisy data and increase robustness to noisy. The framework called granular-ball fuzzy set can be extended to various scenarios in the field of fuzzy data processing. Therefore, the framework has three advantages: high efficiency, robustness and scalability.

\subsection{Definition of the fuzzy granular-ball}
Taking the field of fuzzy data classification as an example, most of the existing processing methods all take sample points as input, and the computation process has a high time cost and is not robust to label noise. In our previous work, we initially provided the definition of a fuzzy granular-ball with a known the membership degree of each sample point as follows \cite{shao}:

\begin{definition}\label{FGB2}
	If the membership degree ${\delta}_i$ of each sample in the dataset is known, fuzzy granular-balls are obtained by the $k$-means algorithm. For each fuzzy granular-ball, $GB_i = \left\{(x_1,{\delta}_1),(x_2,{\delta}_2),...,(x_l,{\delta}_l)\right\}$, where $l$ represents the number of samples in each granular-ball. The membership degree ${\delta}_{GB}$ of the ball is obtained as follows:
	$$
	{\delta}_{GB_i}= \frac{1}{l} \sum_{t=1}^{l} {\delta}_t.	
	$$
\end{definition}	

However, the above definition does used a general framework, its GBSVM algorithm is not designed, and its GBSVM model is incorrect, too complex and inconsistent with the FSVM. However, in practice, most of the sample points of fuzzy data sets do not contain membership degrees, so we define the fuzzy granular-ball by designing the membership function as follows:

\begin{definition}\label{FGB1}
	The set of fuzzy granular-balls, $D = \{ \left( (c_1, r_1),y_1\right), \left( (c_2, r_2),y_2\right),...,\left( (c_n, r_n),y_n\right) \}$ is generated by $k$-means algorithm ($n$ represents the number of balls). The membership degree of the fuzzy granular-ball ${GB_i}$, ${\delta}_{GB_i}$, is defined using the membership function $\mu ( x )$ as follows:
$$
{\delta}_{GB_i} = \mu(c_i).	
$$	
\end{definition}

In view of Definition \ref{FGB1} of fuzzy granular-ball, on the one hand, fuzzy granular-ball is used instead of sample points as input to improve the computational efficiency when the membership function of training samples is known. On the other hand, it is necessary to calculate the corresponding center point of each fuzzy granular-ball as the membership degree of the ball to participate in training, instead of calculating the membership degree of each sample, which significantly reduces the operation cost and improves computational efficiency. 

In addition, the overall label of the fuzzy granular-ball is defined as the label that appears most frequently in the ball. In general, a larger fuzzy granular-ball leads to greater efficiency and lower accuracy. However, since the overall label of a granular-ball is the label that appears most in the ball, the influence of noisy data in each fuzzy granular-ball can be eliminated, which makes the fuzzy granular-ball algorithm more robust.

%In other words, coarse granularity balls may not lead to lower accuracy, because larger coarse granularity balls can reduce the impact of noisy data and increase robustness to noisy data. Moreover, the size of each fuzzy granular-ball can be increased or decreased according to the requirements of efficiency or hardware conditions, so the computation of fuzzy granular-ball is scalable.

\subsection{Fuzzy granular-ball generation method }

The generated granular-balls may also contain samples of different classes. To determine whether the fuzzy granular-ball should be further divided, we also used the concept of "purity," which is defined as the percentage of the majority sample in the fuzzy granular-ball. When the purity value of the granular-ball is too low, the further division is required to obtain a high-quality granular-ball. For example, if a granular-ball contains 20 positive samples and 80 negative samples, its purity is equal to 0.8. If the purity threshold is set to 0.9, then fuzzy granular-ball must be segmented. As is known to all, the $k$-means clustering algorithm is more suitable for generating spherical data clustering to achieve good efficiency, so the method of fuzzy granular-balls generation is designed as shown in Algorithm \ref{Alg:algoritm1}.
 
\begin{algorithm} 
	%\footnotesize
	\caption{Generation of fuzzy granular-balls}
	\label{Alg:algoritm1}
	\hspace*{0.02in} {\bf Input:} 
	Fuzzy training set $D$, the purity threshold $p$;\\  
	\hspace*{0.02in} {\bf Output:} Fuzzy granular-ball $GB$;
	
	\begin{algorithmic}[1]
		\STATE The fuzzy training set $D$ is split into two balls $GB_{1}^{1}$ and $GB_{2}^{1}$ by using 2-means. The number of iteration steps is denoted as $s$ and initialized to 1. 
		\STATE  \textbf{For} each $GB_{j}^{s}$:
		\STATE  \ \ \  Calculate the center $c_{j}^{s}= \frac{1}{n_{j}^{s}} \sum_{i = 1}^{n_{j}^{s}} x_i$ of $GB_{j}^{s}$, where $x_i\in GB_{j}^{s},i=1,...,n_{j}^{s}$;
		\STATE  \ \ \  Calculate the radius $r_{j}^{s}=\frac{1}{n_{j}^{s}} \sum_{i = 1}^{n_{j}^{s}}  \left| x_i - c_{j}^{s}  \right|$;
		\STATE  \ \  The purity degrees $p_{j}^{s}$ is equal to the percentage of majority samples in $GB_{j}^{s}$;
		\STATE  \ \ \  \textbf{If} the value $p_{j}^{s}< p$  \textbf{Then} 
		\STATE  \ \ \  \ \ \ $GB_{j}^{s}$ is split by 2-means clustering algorithm.
		\STATE  \ \ \  \textbf{End if}
		\STATE  \textbf{End for}
		\STATE  \textbf{If} the purity of each granular-ball is higher than $p$ \textbf{Then}
		\STATE  \ \ \ The membership degree of fuzzy granular-ball can be obtained by $\delta_{GB_{j}^{s}}=\mu(c_{j}^{s})$
		\STATE  \textbf{Else}
		\STATE  \ \ \  $s=s+1$, return to step 2;
		\STATE  \textbf{End if}
	\end{algorithmic}
\end{algorithm}

As with granular-ball computing, the learning process of the fuzzy granular-ball classifier includes fuzzy granular-ball generation and its computational learning. Since the number of fuzzy granular-balls generated after splitting a large dataset can almost be considered as a small constant, the training time of the classification or regression process can be ignored. Therefore, the time cost of the fuzzy granular-ball classifier is mainly caused by the time of the fuzzy ball generation process. It is commendable that the fuzzy granular-ball classifier has good robustness and low time complexity \cite{G15}, which does not necessarily result in a loss of accuracy.

\section{The application of granular-ball fuzzy set}\label{sec4}
\subsection{ Granular-ball fuzzy support vector machine}
% fuzzy support vector machine (FSVM) was first proposed by Lin and Wang [10] to reduce the effect of noise and outliers.
In practical problems, there are sample points that may not completely belong to a certain class and some sample points that are affected by noise and have no significance for classification. For example, when turning the steering wheel, the left turn can be 20\% or 100\%, but in both cases the corresponding label is left. However, standard support vector machine treats all training samples equally, so it is very sensitive to the noise and outlier samples mixed in another class, which reduces the generalization ability of the classifier. In view of this situation, Lin et al. \cite{lin2002fuzzy} proposed FSVM model by applying the fuzzy technology to SVM. According to the different contributions of different input samples to classification, the corresponding membership degree was assigned, so as to reduce the influence of noise and outlier samples and improve the classification performance of SVM. However, FSVM spends a lot of time solving the problem of fuzzy data classification with a large number of samples. We apply granular-ball fuzzy set to FSVM and use fuzzy balls instead of sample points as input to improve the computational efficiency. Suppose that we have a series of training points:
%
%
%%隶属度函数
%Many research results show that support vector machines are very sensitive to noise and contour lines. We propose a fuzzy granular-ball computing model to reduce the influence of outliers. We measure the fuzzy membership and take the fuzzy membership as input:
\begin{equation}\label{trainPoint}
\left \{ \left((c_1,r_1),\delta_1,y_1\right),\left((c_2,r_2),\delta_2,y_2\right),...,\left((c_l,r_l),\delta_l,y_l\right) \right \},
\end{equation}
where $c$, $r$ and $\delta$ represent the center, radius and membership of the granular-ball, respectively.

For the membership degree $\delta$, there are two cases in which the membership degree is known and the membership degree is unknown. When the membership degree is known, the membership degree of the fuzzy granular-balls can be obtained by Definition \ref{FGB2}, that is, the average membership degree of all sample points in each fuzzy granular-ball. For most existing data sets, membership information is not included. Therefore, it is necessary to construct membership functions to generate membership information, and the membership of fuzzy granular-ball can be obtained by Definition \ref{FGB1}.

At present, the method for constructing the membership function is primarily measured by calculating the distance from the sample to the class center. The closer the distance, the greater the degree of membership. The farther the distance is, the smaller the membership degree is.
%First, we use a ball to represent the positive and negative samples respectively.

 Denote the mean of class +1 as $x_+$ and that of class -1 as $x_-$. Let the radius of class +1 and the radius of class -1 be:
\begin{equation}\label{}
r_+ = \underset {x_i:y_i=1} {\rm {max}} \left |  x_i - x_+\right |,
\end{equation}

\begin{equation}\label{}
r_- = \underset {x_i:y_i=-1} {\rm {max}} \left | x_i-x_-\right |,
\end{equation}
Let the fuzzy membership $\delta_i$ be a function of the mean and radius of each class:
\begin{equation}\label{eq3.5}
\mu(x_i)=
\left\{\begin{aligned}
&1 - \left |   x_i- x_+\right | / (r_+ + \epsilon ), \quad if \quad y_i = 1,\\
&1 - \left |   x_i- x_-\right | / (r_- + \epsilon ), \quad if \quad y_i = -1,\\
\end{aligned}\right.
\end{equation}
where $\epsilon \textgreater 0$ is used to avoid the case $\mu(\delta_i) = 0$. There is no general rule to follow for the determination of membership function, which needs to be described according to the characteristics of different data sets.

For the fuzzy classifier problem, FSVM considers adding different membership weights to the lost items to make full use of the information of the samples. Similarly, we define the fuzzy granular-ball, and introduce the membership degree of each ball into GBSVM, and the GBFSVM model can be expressed as:

\begin{equation}\label{GBFSVM}
\left\{\begin{aligned}
&\underset {w,b,\xi_i} {\rm {min}} \ \ \ \frac{1}{2}\Vert w \Vert^2 + C \sum_{i=1}^l \delta_i\xi_i, \\
&s.t. \quad y_i(wc_i+b)- \Vert w \Vert {r_i} \geq 1 - \xi_i , i=1,2,...,n,\\
&\quad\quad\ \  \xi_i \geq 0.
\end{aligned}\right.
\end{equation}

After introducing the Lagrange multiplier $\alpha_i$ as the inequality constraint, the Lagrange function of Eq. (\ref{GBFSVM}) can be expressed as:
\begin{equation}\label{lagrangeF}
\begin{aligned}
&\mathcal{L}(w,b,\xi,\alpha,\mu,\delta) =\\
& \ \ \ \frac{1}{2} \Vert w \Vert ^2 - \sum_{i=1}^n \alpha_i (y_i(wc_i+b)- \Vert w \Vert r_i-1+\xi_i)\\
&\ \ \  +C\sum_{i=1}^n\delta_i\xi_i-\sum_{i=1}^n\mu_i\xi_i.
\end{aligned}
\end{equation}

Let $\mathcal{L}(w,b,\xi,\alpha,\mu,\delta)$ on $w$, $b$ and $\xi$ be partial derivatives equal to 0, we obtain
\begin{align}\label{pd1}
\frac{\partial{\mathcal{L}} }{\partial w} = w - \sum_{i=1}^n \alpha_i y_i c_i + \sum_{i=1}^n \alpha_i r_i \frac{w}{\Vert w \Vert}=0,\\ \label{pd2}
\frac{\partial{\mathcal{L}} }{\partial b} =  - \sum_{i=1}^n \alpha_i y_i = 0,\quad \quad \quad\quad\quad\quad\quad\quad \ \\ \label{pd3}
\frac{\partial{\mathcal{L}} }{\partial \xi} =  C - \alpha_i -\mu_i = 0. \quad \quad \quad\quad\quad\quad\quad \ 
\end{align}

Eq. (\ref{pd1}) can be rewritten as:
\begin{equation} \label{w1}
w=\frac{\left \| w\right \| \sum_{i=1}^n\alpha_i y_i c_i}{\left \| w\right \| +\sum_{i=1}^{n}\alpha_i r_i }.
\end{equation}

Square both sides of Equation (\ref{w1}) and take the square root to obtain:
\begin{equation} \label{w2}
\left \| \sum\limits_{i=1}^n\alpha_i y_i c_i  \right \|= \left | \left \| w\right \| + \sum\limits_{i=1}^{n}\alpha_i r_i \right |.
\end{equation}

Since $\left \| w\right \| > 0$, $\alpha_i >0$ and $r_i>0$, $w$ can be rewritten as:
\begin{equation} \label{w11}
	\left \| w\right \| =   \left \| \sum\limits_{i=1}^n\alpha_i y_i c_i  \right \|  - \sum\limits_{i=1}^{n}\alpha_i r_i   =    \left \| A \right \| - B,
\end{equation}	
\begin{equation} \label{w}
	w= \frac{\left( \left \| \sum\limits_{i=1}^n\alpha_i y_i c_i  \right \| - \sum\limits_{i=1}^{n}\alpha_i r_i  \right) \sum\limits_{i=1}^n\alpha_i y_i c_i }{\sum\limits_{i=1}^n\alpha_i y_i c_i}= \frac{(\left \|A \right \| - B) A }{\left \| A\right \| },
\end{equation}
where,
\begin{eqnarray}
	A=\sum_{i=1}^n \alpha_i y_i c_i \notag  , \
	B=\sum_{i=1}^n \alpha_i r_i \notag .
\end{eqnarray}

When $0< \alpha < \delta C$, $b$ can be obtained as 
\begin{equation}\label{b}
b=  \frac{1 + \left \| w\right \| r_i}{y_i} - w c_i.
\end{equation}

Putting Eqs. (\ref{pd1}), (\ref{pd2}) and (\ref{pd3}) into Eq. (\ref{lagrangeF}), the dual model of the inseparable GBFSVM is:
 
\begin{eqnarray}\label{GBFSVM2}
&& \underset {\alpha} {\rm {max}}\quad -\frac{1}{2} A^2 - \frac{1}{2} B^2 +   \Vert A \Vert  B +  \sum_{i=1}^n \alpha_i \notag, \\
&& \ s.t. \quad\quad \sum_{i=1}^n \alpha_i y_i=0 \notag ,\\
&& \quad\quad\quad\quad 0 \leq \alpha_i \leq \delta_i C, i=1,2,...,n. 
\end{eqnarray}

Eq. (\ref{GBFSVM2}) can be expressed as:

\begin{eqnarray}\label{dualGBFSVM}
&&\underset {\alpha} {\rm {max}}\quad -\frac{1}{2} \Vert w \Vert ^2 +  \sum_{i=1}^n \alpha_i \notag, \\
&& \ s.t. \quad\quad \sum_{i=1}^n \alpha_i y_i=0 \notag, \\
&& \quad\quad\quad\quad 0 \leq \alpha_i \leq \delta_i C, i=1,2,...,n, 
\end{eqnarray}
where $\Vert w \Vert$ is given by Eq. (\ref{w11}). 

The interesting phenomenon is that the obtained dual model (\ref{dualGBFSVM}) is consistent with the original model, and it is also corresponding to the FSVM and GBSVM models.
\begin{algorithm}[H]
	%\footnotesize
	\caption{Granular-ball fuzzy support vector machine}
	\label{Alg:algoritm2}
	\hspace*{0.02in}{\bf Input:} 
	The fuzzy granular-ball set was $D=\left \{ \left((c_1,r_1),\delta_1,y_1\right),\left((c_2,r_2),\delta_2,y_2\right),...,\left((c_l,r_l),\delta_l,y_l\right) \right \}$. \\
	\hspace*{0.02in} {\bf Output:}
	$y$
	\begin{algorithmic}[1]
		\STATE According to Eq. (\ref{GBFSVM2}), defines the fitness function $ f=-(-\frac{1}{2} A^2 - \frac{1}{2} B^2 +  \Vert A \Vert  B +  \sum_{i=1}^n \alpha_i)$, where $\alpha_i$ is a Lagrange multiplier. That is to say, each $\alpha_i$ corresponds to a granular-ball $GB_i$, then, $\alpha=[\alpha_1,\alpha_2,\dots,\alpha_n]$.
		\STATE  Initialize the number of the particles $pop$, the maximum number of iterations $max\_iter$, the lower boundary $lb$, the upper boundary $\delta C$, the inertia factor $w$ and the learning factors $c_1$ and $c_2$;
		\STATE Use PSO to looking for the optimal $\alpha=[\alpha_1,\alpha_2,\dots,\alpha_n]$ to satisfy $\substack{\min\\\alpha}-f$;
		\STATE According to Eq. (\ref{w}), 
		calculate $\omega$;
		\STATE According to Eq. (\ref{b}), we calculated $b$.
	\end{algorithmic}
\end{algorithm}

\subsection{Granular-ball fuzzy support vector machine based on triangular fuzzy number}
\label{sec:alg}
%Ji and Chen et al. \cite{BFSVM} proposed a new fuzzy SVM based on the triangular fuzzy number to classify the fuzzy data, but its method still uses points as input.
In practice, there are various examples of fuzzy classification boundaries, particularly in medical diagnoses. The fuzzy number has a very important significance in a fuzzy system. Xue et al. \cite{shao} derived the GBSVM model based on triangular fuzzy numbers. However, GBSVM model, which is used by them has an error that the support vector is not contained in the constraint, its form is not very uniform with the traditional model. Therefore, we have corrected it in the following proposed fuzzy SVM based on triangular fuzzy numbers to classify the fuzzy data model. For a given fuzzy dataset, the fuzzy granular-ball training set can be obtained by using the granular-ball fuzzy set as follows:
\begin{equation}\label{train1}
D_1 = \left\{((c_1,r_1),{\tilde{y}}_1), ((c_2,r_2),{\tilde{y}}_2),...,((c_l,r_l),{\tilde{y}}_l) \right\},
\end{equation}
where the fuzzy number ${\tilde{y}}_j$ reflects the fuzzy category. The fuzzy classification problem involves finding a rule to infer the fuzzy number ${\tilde{y}}_j$ corresponding to any $x$ and thus reflects its fuzzy category \cite{F11}. For fuzzy information, there are generally three types of fuzzy characteristics: fuzzy positive class, i.e., the membership degree of the sample points belonging to the positive class is greater than that belonging to the negative class; fuzzy negative class, i.e., the membership degree of the sample points belonging to the negative class is greater than that belonging to the positive class; center, i.e., the membership degree of the sample points belonging to the positive class and the negative class is equal. 

For convenience, we introduce the membership of the positive class fuzzy granular-ball $\delta_+ \in [0.5, 1]$ and the membership of the negative class fuzzy granular-ball $\delta_- \in [-1, -0.5]$. Therefore, for the membership degree of the fuzzy granular-ball $\delta$, the three fuzzy features can be represented using special triangular fuzzy numbers as follows:
%\begin{equation}\label{eq3}
%{\tilde{y}}_j  = (a_1, a_2, a_3)\\
% = \left\{\begin{aligned}
%& \left( \frac{ 2 \delta^2 + \delta - 2}{ \delta}, 2 \delta - 1, \frac{ 2 \delta^2 - 3 \delta + 2}{ \delta}  \right), 0.5 \leq \delta \leq 1  \\
%& \left( \frac{ 2 \delta^2 + 3 \delta + 2}{ \delta}, 2 \delta +1, \frac{ 2 \delta^2 - \delta - 2}{ \delta}  \right), -1 \leq \delta \leq -0.5 \\
%\end{aligned}\right.
%\end{equation}
\begin{equation}\label{eq3}
 \begin{array}{*{20}{l}}{{{\tilde{y}}_j}={\left( {a_1, a_2, a_3} \right) }}\\{={\left\{ {\begin{array}{*{20}{l}}{\text{}{ \left( {\frac{{2 \delta \mathop{{}}\nolimits^{{2}}+ \delta -2}}{{\delta }},2 \delta -1,\frac{{2 \delta \mathop{{}}\nolimits^{{2}}-3 \delta +2}}{{ \delta }}} \right) },0.5 \le \delta \le 1},\\{\text{}{ \left({\frac{{2 \delta \mathop{{}}\nolimits^{{2}}+3 \delta +2}}{ \delta },2 \delta +1,\frac{{2 \delta \mathop{{}}\nolimits^{{2}}- \delta -2}}{ \delta }} \right) },-1 \le \delta \le -0.5.}\end{array}}\right. }}\end{array}
\end{equation}

To solve the research problem, the fuzzy granular-ball training points in the fuzzy granular-ball set $D_1$ are reordered; that is, the positive class fuzzy granular-balls are ranked in the front and the negative class fuzzy granular-balls are ranked in the back, so as to obtain the fuzzy training set in the following form:

\begin{equation}\label{train2}
\begin{aligned}
& \ \ \ D_2 = \{((c_1,r_1),{\tilde{y}}_1),...,((c_p,r_p),{\tilde{y}}_p)\\
&\ \ \  ((c_{p+1},r_{p+1}),{\tilde{y}}_{p+1}),...,((c_l,r_l),{\tilde{y}}_l)\},
\end{aligned}
\end{equation}
where $((c_t,r_t),{\tilde{y}}_t)$ is the fuzzy positive class ball $t = 1,...,p$ and $((c_i,r_i),{\tilde{y}}_i)$ is the fuzzy negative class ball $i = p+1,...,l$.

Considering the fuzzy linear separable problem corresponding to the fuzzy training set $D_2$, at this time, under the confidence level $\lambda \in [0, 1]$, the fuzzy classification problem is transformed into the following fuzzy chance constrained programming problem with $(w,b)^T$ as the decision variable:
\begin{equation}\label{eq3.0}
\left\{\begin{aligned}
&\underset {w,b} {\rm {min}} \ \ \ \frac{1}{2} \Vert w \Vert ^2 ,\\
&s.t. \quad {\rm{Pos}}\{ {\tilde{y}}_i(w \cdot c_i+b)- \Vert w \Vert {r_i} \geq 1 \} \geq \lambda,{i=1,2,...,l}.\\
\end{aligned}\right.
\end{equation}

\begin{theorem}\label{theorem3}
	Let $\tilde{y}$ in fuzzy chance constrained programming Eq. (\ref{eq3.0}) is a triangular fuzzy number, i.e. $\tilde{y}=(a_1,a_2,a_3)$. Under the confidence level $\lambda$, the clear equivalent programming of granular fuzzy chance constrained programming (\ref{eq3.0}) is the following quadratic programming:
%(ordinary programming equivalent to Eq. (\ref{eq3.0}))
\begin{equation}\label{eq3.3}
\left\{\begin{aligned}
&\underset {w,b} {\rm {min}} \ \ \ \frac{1}{2} \Vert w \Vert ^2 ,\\
&s.t.\quad((1-\lambda)a_{t3}+\lambda a_{t2})\cdot (wc_i+b)-\Vert w \Vert 
{r_{t}} \geq 1,\quad\quad \\
& \ \ \ \ \ \ \ \ \ \ \ \ \ \ \ \ \ \ \ \ \ \ \ \ \ \ \ \ \ \ \ \ \ \ \ \ 
\ \ \ \ \ \ \ \ \ \ \ \ \ \ \ \ 
t=1,2,...,p,\\
& \ \quad\quad((1-\lambda)a_{j1}+\lambda a_{j2})\cdot (wc_i+b)-\Vert w 
\Vert {r_j} \geq 1,\quad\quad \\
& \ \ \ \ \ \ \ \ \ \ \ \ \ \ \ \ \ \ \ \ \ \ \ \ \ \ \ \ \ \ \ \ \ \ \ \ 
\ \ \ \ \ \ \ \ \ \ \ \ \ \ \ \ j=p+1,...,l.\\
\end{aligned}\right.
\end{equation}
\end{theorem}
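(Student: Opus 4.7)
The plan is to convert each fuzzy-chance constraint into its deterministic equivalent by applying \refl{de2.4} to a suitably rearranged triangular fuzzy quantity. First I would move every term onto one side to rewrite the constraint in the canonical form needed by the lemma, namely
\begin{equation*}
\mathrm{Pos}\{{\tilde{y}}_i(w c_i + b) - \Vert w \Vert r_i \geq 1\} \geq \lambda
\ \Longleftrightarrow\ \mathrm{Pos}\{\tilde{Z}_i \leq 0\} \geq \lambda,
\end{equation*}
where $\tilde{Z}_i := 1 + \Vert w \Vert r_i - {\tilde{y}}_i(w c_i + b)$. Because $\Vert w \Vert r_i$ and $1$ are crisp, $\tilde{Z}_i$ is still a triangular fuzzy number whose triple I can read off directly from $(a_{i1},a_{i2},a_{i3})$ once I know the sign of the scalar $w c_i + b$. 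Applying \refl{de2.4} to $\tilde{Z}_i = (z_{i1}, z_{i2}, z_{i3})$ then yields the crisp inequality $(1-\lambda)z_{i1} + \lambda z_{i2} \leq 0$, which after rearrangement should produce exactly the two constraint families in (\ref{eq3.3}).

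The key computation is the fuzzy arithmetic in $\tilde{Z}_i$, and this is where the positive/negative class split arises. For a positive-class ball ($t=1,\dots,p$) a meaningful classifier needs $w c_t + b > 0$, so multiplication by the positive scalar preserves the ordering of the triple and ${\tilde{y}}_t(w c_t + b) = \bigl((w c_t+b)a_{t1},\,(w c_t+b)a_{t2},\,(w c_t+b)a_{t3}\bigr)$. After subtracting this from the crisp $1+\Vert w\Vert r_t$ (which flips the triple because $-(x_1,x_2,x_3) = (-x_3,-x_2,-x_1)$), the left endpoint of $\tilde{Z}_t$ is $z_{t1} = 1 + \Vert w\Vert r_t - (w c_t+b)a_{t3}$ and its center is $z_{t2} = 1 + \Vert w\Vert r_t - (w c_t+b)a_{t2}$. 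Plugging into \refl{de2.4} and collecting coefficients gives $((1-\lambda)a_{t3} + \lambda a_{t2})(w c_t+b) - \Vert w\Vert r_t \geq 1$, which is the first block of (\ref{eq3.3}).

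For a negative-class ball ($j=p+1,\dots,l$) the same derivation runs, except that now the natural situation is $w c_j + b < 0$, so multiplying ${\tilde{y}}_j$ by this negative scalar reverses the ordering of the triple, producing ${\tilde{y}}_j(w c_j + b) = \bigl((w c_j+b)a_{j3},\,(w c_j+b)a_{j2},\,(w c_j+b)a_{j1}\bigr)$ with $(w c_j+b)a_{j3}$ smallest. After subtracting and flipping signs, the left endpoint of $\tilde{Z}_j$ becomes $z_{j1} = 1 + \Vert w\Vert r_j - (w c_j+b)a_{j1}$ and the center is $z_{j2} = 1 + \Vert w\Vert r_j - (w c_j+b)a_{j2}$; then \refl{de2.4} yields the second block $((1-\lambda)a_{j1} + \lambda a_{j2})(w c_j+b) - \Vert w\Vert r_j \geq 1$. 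Since the objective $\tfrac{1}{2}\Vert w\Vert^2$ is untouched throughout, assembling the two blocks gives precisely the quadratic program (\ref{eq3.3}).

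The main obstacle I expect is bookkeeping the sign-dependent order reversal cleanly. The coefficients $a_{t3}$ versus $a_{j1}$ that appear with weight $1-\lambda$ in the two constraint blocks are not a typo: they are forced by which endpoint of ${\tilde{y}}_i(w c_i + b)$ becomes the leftmost after the two sign flips (multiplication by $w c_i + b$ and subtraction from the crisp part). The proof therefore needs an explicit lemma-level identification of the left endpoint and center of $\tilde{Z}_i$ under the two sign regimes before \refl{de2.4} can be invoked; aside from that the rest is algebraic rearrangement.
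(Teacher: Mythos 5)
Your proposal is correct and follows essentially the same route as the paper: rewrite each chance constraint as $\mathrm{Pos}\{\tilde{Z}_i \leq 0\}\geq\lambda$, track how multiplication by the scalar $wc_i+b$ and the subsequent negation reorder the triple $(a_{i1},a_{i2},a_{i3})$, apply \refl{de2.4} to the resulting left endpoint and center, and split into the two sign regimes identified with the positive and negative classes. The only cosmetic difference is that the paper first packages the sign dependence through a positive/negative-part decomposition $h_i = h_i^{+}-h_i^{-}$ before specializing to the two cases, whereas you perform the case analysis on the sign of $wc_i+b$ directly; the resulting constraint blocks are identical.
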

%\begin{equation}\label{eq3.3}
%\left\{\begin{aligned}
%&\underset {w,b} {\rm {min}} \ \ \ \frac{1}{2} \Vert w \Vert ^2 \\
%&s.t.\quad ((1-\lambda)a_{t3}+\lambda a_{t2})(w \cdot c_t + b) - \Vert w \Vert r_t \geq 1, \quad\quad wc_i+b \textgreater 0\\
%&\quad \quad \ ((1-\lambda)a_{i1}+\lambda a_{i2})(w \cdot c_i + b) - \Vert w \Vert r_i \geq 1, \quad\quad wc_i+b \leq 0\\
%\end{aligned}\right.
%\end{equation}

\begin{proof}
	From the properties of triangular fuzzy number operation, if $\tilde{y_i} = (a_{i1},a_{i2},a_{i3})$ is a triangular fuzzy number, then ${-\tilde{y}}_i(w \cdot c_i+b)+ \Vert w \Vert {r_i}+1 $ is also a triangular fuzzy number.
	
	Let  
	\begin{equation}\label{eq3.4}
	h_i^+(w,b)=
	\left\{\begin{aligned}
	&0, \ \ \quad\quad\quad\quad\quad h_i(w,b)\geq 0,\\
	&-(wc_i+b),\quad h_i(w,b)\textless 0,\\
	\end{aligned}\right.
	\end{equation}
	
	\begin{equation}\label{eq3.5}
	h_i^-(w,b)=
	\left\{\begin{aligned}
	&(wc_i+b), \ \ \quad h_i(w,b)\geq 0,\\
	&0,\quad\quad\quad\quad\quad h_i(w,b)\textless 0.\\
	\end{aligned}\right.
	\end{equation}

	Then $h_i^+(w,b)$ and $h_i^-(w,b)$ are nonnegative numbers and $h_i(w,b)=h_i^+(w,b)-h_i^-(w,b)$.
	That is,
    \begin{equation}\label{eq3.6}
    \begin{aligned}
     &{\rm{Pos}}\{ {\tilde{y}}_i(w \cdot c_i+b)- \Vert w \Vert {r_i} \geq 1} \}\geq \lambda,{i=1,2,...,l\notag\\
     & ={\rm{Pos}}\{ 1- {\tilde{y}}_i(w \cdot c_i+b) + \Vert w \Vert {r_i} \leq 0}\}\geq \lambda ,{i=1,2,...,l .
     \end{aligned}
     \end{equation}	
	
	Therefore,
%	\begin{eqnarray}
%	&&h_i(w,b)\cdot \tilde{y_i} +\Vert w \Vert {r_i}+1\notag\\
%	&&=(h_i^+(w,b)-h_i^-(w,b))\cdot \tilde{y_i}+\Vert w \Vert {r_i}+1\notag\\
%	&&=(h_i^+(w,b)-h_i^-(w,b))\cdot (a_{i1},a_{i2},a_{i3})+\Vert w \Vert {r_i}+1\notag\\
%	&&=(h_i^+(w,b)a_{i1}-h_i^-(w,b)a_{i3}+\Vert w \Vert {r_i}+1,h_i(w,b)a_{i2}+\Vert w \Vert {r_i}+1, \\ 
%	&& \ \ \ \  h_i^+(w,b)a_{i3}-h_i^-(w,b)a_{i1}+\Vert w \Vert {r_i}+1)\notag
%	\end{eqnarray}
	\begin{eqnarray}
	&&h_i(w,b)\cdot \tilde{y_i} +\Vert w \Vert {r_i}+1\notag\\
	&&=(h_i^+(w,b)-h_i^-(w,b))\cdot \tilde{y_i}+\Vert w \Vert {r_i}+1\notag\\
	&&=(h_i^+(w,b)-h_i^-(w,b))\cdot (a_{i1},a_{i2},a_{i3})+\Vert w \Vert {r_i}+1\notag\\
	&&=(h_i^+(w,b)a_{i1}-h_i^-(w,b)a_{i3}+\Vert w \Vert 
	{r_i}+1,\\ 
	&& \ \ \ \  h_i(w,b)a_{i2}+\Vert w \Vert {r_i} 
	+1,h_i^+(w,b)a_{i3}-\notag\\ 
	&& \ \ \ \  h_i^-(w,b)a_{i1}+\Vert w \Vert .
	{r_i}+1).\notag
	\end{eqnarray}
	
	According to Lemma \ref{de2.4}, the clear equivalent class of Eq. (\ref{eq3.6}) is obtained by 
%	\begin{equation}\label{eq3.7}
%	(1-\lambda)((h_i^+(w,b)a_{i1}-h_i^-(w,b)a_{i3})+\lambda(h_i(w,b)a_{i2})+\Vert w \Vert {r_i}+1 \leq 0,i=1,2,...,l
%	\end{equation}
	\begin{equation}\label{eq3.7}
	\begin{split}	
	(1-\lambda)((h_i^+(w,b)a_{i1}-h_i^-(w,b)a_{i3})+\lambda(h_i(w,b)a_{i2})\\
	+\Vert
	w \Vert {r_i}+1 \leq 0,i=1,2,...,l.
	\end{split}
	\end{equation}
	
	At the confidence level $\lambda(0 \textless \lambda \leq 1)$, Eq. (\ref{eq3.7}) can be expressed as
%	\begin{equation}\label{eq3.8}
%	\left\{\begin{aligned}
%	&((1-\lambda)a_{i3}+\lambda a_{i2})\cdot (wc_i+b)-\Vert w \Vert {r_i} \geq 1,\quad\quad wc_i+b \textgreater 0\\
%	&((1-\lambda)a_{i1}+\lambda a_{i2})\cdot (wc_i+b)-\Vert w \Vert {r_i} \geq 1,\quad\quad wc_i+b \leq 0\\
%	\end{aligned}\right.
%	\end{equation}
%	
	\begin{equation}\label{eq3.8}
    \left\{\begin{aligned}
     &((1-\lambda)a_{i3}+\lambda a_{i2})\cdot (wc_i+b)-\Vert w \Vert {r_i} \geq 
  1, wc_i+b \textgreater 0,\\
     &((1-\lambda)a_{i1}+\lambda a_{i2})\cdot (wc_i+b)-\Vert w \Vert {r_i} \geq 1, wc_i+b \leq 0.\\
    \end{aligned}\right.
    \end{equation}
	%Sort the fuzzy training sets, that is, the positive class points are in the front, and the negative class points are in the back. Then:
	%\begin{equation}\label{eq3.9}
	%S=\{(x_1,y_1),(x_2,y_2),...,(x_p,y_p),(x_{p+1},y_{p+1},...,(x_l,y_l))\}
	%\end{equation}
	%Where $\{(x_t,y_t),t=1,2,...,p\}$ is a fuzzy positive class point and $\{(x_j,y_j),j=p+1,...,p\}$  is a fuzzy negative class point.\\
	
	In summary, Eq. (\ref{eq3.0}) can be expressed as:
%	\begin{equation}\label{eq3.10}
%	\left\{\begin{aligned}
%	&\underset {w,b} {\rm {min}} \ \ \ \frac{1}{2} \Vert w \Vert ^2 \\
%	&s.t.\quad((1-\lambda)a_{t3}+\lambda a_{t2})\cdot (wc_i+b)-\Vert w \Vert {r_{t}} \geq 1,\quad\quad t=1,2,...,p\\
%	& \ \quad\quad((1-\lambda)a_{j1}+\lambda a_{j2})\cdot (wc_i+b)-\Vert w \Vert {r_j} \geq 1,\quad\quad j=p+1,...,l\\
%	\end{aligned}\right.
%	\end{equation}
	\begin{equation}\label{eq3.10}
\left\{\begin{aligned}
&\underset {w,b} {\rm {min}} \ \ \ \frac{1}{2} \Vert w \Vert ^2 ,\\
&s.t.\quad((1-\lambda)a_{t3}+\lambda a_{t2})\cdot (wc_t+b)-\Vert w \Vert 
{r_{t}} \geq 1,\quad\quad \\
& \ \ \ \ \ \ \ \ \ \ \ \ \ \ \ \ \ \ \ \ \ \ \ \ \ \ \ \ \ \ \ \ \ \ \ \ 
\ \ \ \ \ \ \ \ \ \  
t=1,2,...,p,\\
& \ \quad\quad((1-\lambda)a_{j1}+\lambda a_{j2})\cdot (wc_j+b)-\Vert w 
\Vert {r_j} \geq 1,\quad\quad \\
& \ \ \ \ \ \ \ \ \ \ \ \ \ \ \ \ \ \ \ \ \ \ \ \ \ \ \ \ \ \ \ \ \ \ \ \ 
\ \ \ \ \ \ \ 
j=p+1,...,l.
\end{aligned}\right.
\end{equation}
\end{proof}

By proving Theorem \ref{theorem3}, the problem (\ref{eq3.0}) is equivalent to (\ref{eq3.10}). After introducing the Lagrange multipliers $\beta_t$ and $\alpha_j$, the corresponding augmented Lagrange function of Eq. (\ref{eq3.10}) can be expressed as:
%\begin{align}\label{eq3.11}
%\mathcal{L}(w,b,\beta,\alpha) =\frac{1}{2} \Vert w \Vert ^2 - \sum_{t=1}^p \beta_t \cdot (((1-\lambda)a_{t3}+\lambda a_{t2})\cdot (wc_t+b)-\Vert w \Vert {r_{t}}-1) \notag \\
%-\sum_{j={p+1}}^l \alpha_j \cdot (((1-\lambda)a_{j1}+\lambda a_{j2})\cdot (wc_j+b)-\Vert w \Vert {r_{j}}-1)
%\end{align}
\begin{small}
\begin{align}\label{eq11}
\mathcal{L}&(w,b,\beta,\alpha) 
=\frac{1}{2} \Vert w \Vert ^2 - \sum_{t=1}^p 
\beta_t \cdot (((1-\lambda)a_{t3}+\lambda a_{t2})\cdot (wc_t+b) \notag \\
&-\Vert w \Vert {r_{t}}-1)-\sum_{j={p+1}}^l \alpha_j \cdot 
(((1-\lambda)a_{j1}+\lambda a_{j2})\cdot 
(wc_j+b)\notag \\
& - \ \Vert w \Vert {r_{j}}-1),
\end{align}  
\end{small}
where, $\beta=(\beta_1,\beta_2,...,\beta_p)^T  \in \mathbb{R}  ^p, \  \alpha=(\alpha_{p+1},...,\alpha_l)^T  \in \mathbb{R} ^{l-p}$.\\

Let $\mathcal{L}(w,b,\beta,\alpha)$ on $w$ and $b$ partial derivatives be equal to zero, and we obtain
%\begin{align}\label{eq3.12}
%\frac{\partial{\mathcal{L}} }{\partial w} = w - \sum_{t=1}^p \beta_t \cdot ((1-\lambda)a_{t3}+\lambda a_{t2})\cdot c_t - \sum_{j={p+1}}^l \alpha_j \cdot ((1-\lambda)a_{j1}+ \lambda a_{j2})\cdot c_j  \notag \\
%+ \sum_{j={p+1}}^l \alpha_j \cdot ((1-\lambda)a_{j1} + \sum_{t=1}^p \beta_t \cdot r_t \frac{w}{\Vert w \Vert} + \lambda a_{j2})\cdot c_j + \sum_{j={p+1}}^l \alpha_j \cdot r_j \frac{w}{\Vert w \Vert} =0
%\end{align}
%
%\begin{align}\label{eq3.13}
%\frac{\partial{\mathcal{L}} }{\partial b} = - \sum_{t=1}^p \beta_t \cdot ((1-\lambda)a_{t3}+\lambda a_{t2})\cdot c_t -\sum_{j={p+1}}^l \alpha_j \cdot ((1-\lambda)a_{j1}+ \lambda a_{j2})=0
%\end{align}
\begin{align}\label{eq3.12}
\frac{\partial{\mathcal{L}} }{\partial w} &= w - M -N + \sum_{t=1}^p \beta_t 
\cdot r_t \frac{w}{\Vert w \Vert}+ \sum_{j={p+1}}^l \alpha_j \cdot r_j \frac{w}{\Vert w \Vert} =0, \quad \quad\quad \quad \quad \quad
\end{align}

\begin{align}\label{eq3.13}
%\frac{\partial{\mathcal{L}} }{\partial b} &= - \sum_{t=1}^p \beta_t \cdot & \notag \\
%((1-\lambda)a_{t3}+\lambda a_{t2})\cdot c_t -  \notag\\
%&\sum_{j={p+1}}^l \alpha_j \cdot 
%((1-\lambda)a_{j1}+ \lambda a_{j2})=0 
\frac{\partial{\mathcal{L}} }{\partial b}= -{\sum_{t=1}^p \beta_t  ((1-\lambda)a_{t3}+\lambda a_{t2})  
}\notag \quad\quad \quad\quad \quad\quad \quad \\
  - {\sum_{j={p+1}}^l \alpha_j ((1-\lambda)a_{j1}+\lambda a_{j2})   } =0, \ \ \quad \quad \quad 
\end{align}
where,
\begin{flalign}
&M={\sum_{t=1}^p \beta_t ((1-\lambda)a_{t3}+\lambda a_{t2}) \cdot c_t 
}\notag, \\
&N={\sum_{j={p+1}}^l \alpha_j ((1-\lambda)a_{j1}+\lambda a_{j2}) \cdot c_j 
} .  
\end{flalign}
By simplifying Eq. (\ref{eq3.12}) and Eq. (\ref{eq3.13}), we can obtain:
%\begin{flalign}\label{eq3.14}
%w=\frac{{\Vert w \Vert} \left( \sum_{t=1}^p \beta_t ((1-\lambda)a_{t3}+\lambda a_{t2}) \cdot c_t + \sum_{j={p+1}}^l \alpha_j ((1-\lambda)a_{j1}+\lambda a_{j2}) \cdot c_j \right)}{{\Vert w \Vert} +  \sum_{t=1}^p \beta_t \cdot r_t + \sum_{j=p+1}^l \alpha_j r_j }
%\end{flalign}
\begin{equation}\label{eq3.14}
w= \frac{\Vert w \Vert(M+N)}{\Vert w \Vert +{\sum\limits_{t=1}^p \beta_t   r_t}+{\sum\limits_{j=p+1}^l \alpha_j r_j}}.
\end{equation}
By further simplifying Eq. (\ref{eq3.14}), we can get:

\begin{equation}\label{eq3.115}
{ \left\Vert { M + N } \right\Vert }^2 =\left( \Vert w \Vert + {\sum\limits_{t=1}^p \beta_t  r_t}+{\sum\limits_{j=p+1}^l \alpha_j r_j} \right) ^2.
\end{equation}
%
%\begin{equation}\label{eq3.15}%\Vert\Vert
%\begin{array}{*{20}{l}}{{ \left\Vert {w} \right\Vert }={ \left| {{ \left\Vert {{\mathop{ \sum }\limits_{{t=1}}^{{p}}{ \beta \mathop{{}}\nolimits_{{t}}{ \left( {{ \left( {1- \lambda } \right) }a\mathop{{}}\nolimits_{{t3}}+ \lambda a\mathop{{}}\nolimits_{{t2}}} \right) }\cdot c\mathop{{}}\nolimits_{{t}}}}+{\mathop{ \sum }\limits_{{j=p+1}}^{{l}}{ \alpha \mathop{{}}\nolimits_{{j}}{ \left( {{ \left( {1- \lambda } \right) }a\mathop{{}}\nolimits_{{j1}}+ \lambda a\mathop{{}}\nolimits_{{j2}}} \right) }\cdot c\mathop{{}}\nolimits_{{j}}}}} \right\Vert }}\right. }}\\\quad \quad \quad \quad {{\left. {-{\mathop{ \sum }\limits_{{t=1}}^{{p}}{ \beta \mathop{{}}\nolimits_{{t}}r\mathop{{}}\nolimits_{{t}}}}-{\mathop{ \sum }\limits_{{j=p+1}}^{{l}}{ \alpha \mathop{{}}\nolimits_{{j}}r\mathop{{}}\nolimits_{{j}}}}} \right| }}\end{array}
%\end{equation}
Taking the square root of Eq. (\ref{eq3.115}), since ${ \left\Vert {w} \right\Vert }, \beta, \alpha, r>0$,  ${ \left\Vert {w} \right\Vert }$ can be obtain as follows:
\begin{equation}\label{eq3.15}%\Vert\Vert
\begin{array}{*{20}{l}}{{ \left\Vert {w} \right\Vert }=   {{ \left\Vert { M + N } \right\Vert }}    {-{\mathop{ \sum }\limits_{t=1}^{p}{ \beta_t r_t}}-{\mathop{ \sum }\limits_{j=p+1}^{l}{\alpha_j r_j}}} }\end{array}.
\end{equation}
Substituting Eq. (\ref{eq3.15}) into Eq. (\ref{eq3.14}), $w$ can be described as follows: 

\begin{equation}\label{membership function}
w= \frac{(\Vert E \Vert -F) \cdot E}{\Vert E \Vert},
\end{equation}\label{eq3.16}
where,
\begin{flalign}
%&A={\sum_{t=1}^p \beta_t ((1-\lambda)a_{t3}+\lambda a_{t2}) \cdot c_t + \sum_{j={p+1}}^l \alpha_j ((1-\lambda)a_{j1}+\lambda a_{j2}) \cdot c_j}\notag \\
 E=M+N; \ \ \ 
 F={\sum_{t=1}^p \beta_t r_t + \sum_{j=p+1}^l \alpha_j r_j} .
\end{flalign}

Put Eq. (\ref{eq3.13}) and (\ref{eq3.14}) into Eq. (\ref{eq11}), the dual function of Eq. (\ref{eq3.10}) is

%\begin{equation}\label{eq3.17}
%\left\{\begin{aligned}
%&\underset {\alpha,\beta} {\rm {max}}\quad -\frac{1}{2} A ^2 + \frac{1}{2} B ^2 + \left | \Vert A \Vert -B \right | \cdot B + \sum_{t=1}^p \beta_t + \sum_{j=p+1}^l \alpha_j \\
%&s.t.\quad\sum_{t=1}^p \beta_t((1-\lambda)r_{t3}+\lambda r_{t2}) + \sum_{j=p+1}^l \alpha_j((1-\lambda)r_{j1}+\lambda r_{j2})=0\\
%\end{aligned}\right.
%\end{equation}cdot
%\begin{small}
\begin{equation}\label{eq3.17}
	\left\{\begin{aligned}
		&\underset {\alpha,\beta} {\rm {max}}\quad -\frac{1}{2} E ^2 - \frac{1}{2} F ^2 + \Vert E \Vert F + \sum_{t=1}^p \beta_t + \sum_{j=p+1}^l \alpha_j ,\\
		&s.t.\quad\sum_{t=1}^p \beta_t((1-\lambda)r_{t3}+ \lambda r_{t2})+  \\
		&\ \ \ \ \ \ \sum_{j=p+1}^l \alpha_j((1-\lambda)r_{j1}+\lambda r_{j2})=0.
	\end{aligned}\right.
\end{equation}
%\end{small}

Eq. (\ref{eq3.17}) can be transformed into the same function as the original model of SVM:
\begin{small}
	\begin{equation}\label{eq3.18}
		\left\{\begin{aligned}
			&\underset {\alpha,\beta} {\rm {max}}\quad -\frac{1}{2} \Vert w \Vert ^2 + \sum_{t=1}^p \beta_t + \sum_{j=p+1}^l \alpha_j ,\\
			&s.t.\quad\sum_{t=1}^p \beta_t((1-\lambda)a_{t3}+\lambda a_{t2}) + \\
			&\ \ \ \ \ \ \sum_{j=p+1}^l \alpha_j((1-\lambda)a_{j1}+\lambda a_{j2})=0.\\
		\end{aligned}\right.
	\end{equation}
\end{small}

\section{Experiment}{\label{sec:experiment}}
% PyCharm is the software experimental environment, and python 3.7 is used. And we selected six benchmark datasets in the public dataset UCI for the experiment. The dataset name, number of samples and dimensions used are shown in Table \ref {tab:1}:
In this section, since it is difficult to find suitable data with triangular fuzzy number characteristics, we only conducted experiments using the FSVM method for fuzzy granular-ball extension applications. The feasibility and efficiency of GBFSVM are verified by comparison with the SVM and FSVM methods. In the experiment, our hardware experimental environment is AMD Ryzen Threadripper PRO 5975WX 32-Cores 3.60 GHz, and the software experiment environment is Python 3.9. The six benchmark datasets used in the experiment are all from the public dataset UCI, and their corresponding dataset names, sample numbers and dimensions are listed in Table \ref {tab:1}. To ensure the fairness of the experiment, all models are optimized using the particle swarm optimization (PSO) algorithm with the same parameters.

\begin{table}[!ht]
	\centering
	\caption{Dataset Information}
	\setlength{\tabcolsep}{6mm}{ 
		\label{tab:1}
		\begin{tabular}{lcc}
			\hline
			Dataset 	& Samples 	& Dimensionality \\ \hline
			Fourclass 	& 862 		& 2 \\ 
			Haberman 	& 306 		& 3 \\ 
			Heart1 		& 294 		& 13 \\
			Titanic 	& 2201 		& 2 \\ 
			BreastCancer& 683 		& 9 \\ 
			Credit      & 690       & 15 \\ \hline
	\end{tabular}}
\end{table}

In this study, the data set shown in Table \ref {tab:1} was selected for experimental comparison, and the parameter $C=10$ was set. The parameters of the PSO algorithm used in the optimization process are set as follows: the dimension $dim$ denotes the number of independent variables of the objective function and $pop$ denotes the number of particles; the maximum number of iterations $max_{iter}$ is set to $1050$; the inertia coefficient $w$ is set to $0.5$; learning factor $c_1=c_2$ is equal to $1.6$. Owing to the randomization of the test seed and the fact that all the results have to meet the constraints, the triangle optimization rule is destroyed. Therefore, the experimental results included the maximum values of SVM, FSVM and GBSVM in four runs on each dataset. In the experiment, we tested each dataset with different percentages of label noise, including 0\%, 5\%, 10\%, 15\%, 20\%, 25\% and 30\%. 

% Please add the following required packages to your document preamble:
% \usepackage{multirow}
\begin{table}[!ht]
	\caption{Comparsion of Accuracy between posSVM, FSVM and GBFSVM with different levels of class noise}
	\label{tab:2}
	\resizebox{\linewidth}{!}{
		\begin{tabular}{llllllllllllllllllllll}
			\hline
			\multicolumn{2}{l}{Dataset}    & Fourclass & Haberman & Heart1 & Titanic & BreastCancer & Credit \\ \hline
			\multirow{3}{*}{0\%}  
			& SVM    & 0.6705    & 0.3548   & 0.6441 & 0.3628  & 0.6350       & 0.5072 \\
			& FSVM   & 0.6532    & 0.3065   & 0.6949 &  0.6757  & 0.8686       & 0.5217 \\
			& GBFSVM & \textbf{0.7803}    & \textbf{0.7903}   & \textbf{0.7627} & \textbf{0.7778}  & \textbf{0.9927}       & \textbf{0.8261} \\
			\rule{0pt}{10pt}\multirow{3}{*}{5\%}  
			& SVM    & 0.3815    & 0.3387   & 0.5254 & 0.3220   & 0.7080       & 0.5870 \\
			& FSVM   & 0.6647    & 0.4032   & 0.5932 & 0.3628  & 0.7810       & 0.5290 \\
			& GBFSVM & \textbf{0.7514}    & \textbf{0.8226}   & \textbf{0.7458} & \textbf{0.6825}   & \textbf{0.9927  }     & \textbf{0.6739} \\
			\rule{0pt}{10pt} \multirow{3}{*}{10\%} 
			& SVM    & 0.6763    & 0.2903   & 0.5932 & 0.3492  & 0.7299       & 0.5870 \\
			& FSVM   & 0.6821    & 0.6290   & \textbf{0.6780} & 0.3741  & 0.7080       & 0.6159 \\
			& GBFSVM & \textbf{0.8266}    & \textbf{0.8065}   & 0.6441 & \textbf{0.6646} & \textbf{1.0000 }      & \textbf{0.8768} \\
			\rule{0pt}{10pt} \multirow{3}{*}{15\%} 
			& SVM    & 0.6532    & 0.3226   & 0.6271 & 0.3605  & 0.6861       & 0.6014 \\
			& FSVM   & 0.7110    & 0.3065   & 0.7288 & 0.3515  & 0.6277       & 0.4783 \\
			& GBFSVM & \textbf{0.8208}    & \textbf{0.8387 }  & \textbf{0.7457} & \textbf{0.7846} & \textbf{0.9927 }     & \textbf{0.8043} \\
			\rule{0pt}{10pt} \multirow{3}{*}{20\%} 
			& SVM    & 0.7110    & 0.3226   & 0.6271 & 0.3379  & 0.6788       & 0.4928 \\
			& FSVM   & 0.6647    & 0.3710   & \textbf{0.6949} & 0.6871  & 0.7080       & 0.5435 \\
			& GBFSVM & \textbf{0.7919}    & \textbf{0.8387}   & 0.6610 & \textbf{0.8163} & \textbf{0.9854}      & \textbf{0.8188} \\
			\rule{0pt}{10pt} \multirow{3}{*}{25\%} 
			& SVM    & 0.3815    & 0.2903   & 0.4407 & 0.3333  & 0.6788       & 0.5870 \\
			& FSVM   & 0.6416    & 0.3387   & 0.7119 & 0.3447  & 0.6423       & 0.4565 \\
			& GBFSVM & \textbf{0.7514}    & \textbf{0.7903 }  & \textbf{0.7288} & \textbf{0.7800}  & \textbf{0.9854  }     & \textbf{0.7971} \\
			\rule{0pt}{10pt} \multirow{3}{*}{30\%} 
			& SVM    & 0.3931    & 0.2903   & 0.6610 & 0.6599  & 0.6496       & 0.5290 \\
			& FSVM   & 0.6012    & 0.3065   & 0.6271 & 0.6780  & 0.7080       & 0.5942 \\
			& GBFSVM & \textbf{0.7803}    & \textbf{0.7903 }  & \textbf{0.6680} & \textbf{0.7007} & \textbf{0.9927 }      & \textbf{0.8116} \\ \hline
		\end{tabular}
	}
\end{table}

The classification accuracies of SVM, FSVM and GBFSVM under different noise levels are listed in the Table. \ref{tab:2}. As shown in Table \ref{tab:2}, GBFSVM obtains higher classification accuracies than the SVM and FSVM in most instances. In addition, the GBFSVM also consistently achieves the best results at high noise levels. This is because the label of a granular-ball is defined as the label with the most appearances in a granular-ball. Further, taking granular-balls as input can reduce the effect of label noise, which has been described in detail in Section II(B).

To further verify the high efficiency of granular-ball fuzzy set, we provide 
the running time results for SVM, FSVM and GBFSVM solved by the PSO algorithm in Table \ref {running_time}. The bold numbers indicate the best results of the three methods. Here, the time of the granular-ball purity threshold optimization was not considered, and the average running time of GBFSVM was used. In fact, the work of granular-ball calculation has been able to achieve the purity threshold adaptation \cite{xia2022efficient}. Owing to the length of this paper and the complexity of the problem be conducted in the future. It is obvious that GBSVM is much faster than the other two methods. The reason is that the application of granular-balls instead of points as input greatly reduces the number of training samples, which trains the data faster. A detailed theoretical analysis is presented in Section II. In summary, the GBFSVM is much more efficient than SVM and FSVM.

\begin{table}[!ht]
	\centering
	\caption{Comparison of the running time between SVM, FSVM and GBFSVM}
	\setlength{\tabcolsep}{4.3mm}
	\label{running_time}{
		\begin{tabular}{llll}
			\hline
			dataset      &SVM  & FSVM & GBFSVM \\ \hline
			Fourclass    & 1740.4913    & 1492.9857     & 11.0211  \\
			Haberman     & 207.2167     & 192.7256      & 79.3770  \\
			Heart1       & 206.6333     & 229.6339      & 167.05811 \\
			Titanic      & 11425.4193   & 11942.9475    & 628.1647 \\
			BreastCancer & 886.2465     & 1294.1307     & 4.7140 \\
			Credit       & 1191.9513    & 1063.8960     & 730.7979 \\ \hline
	\end{tabular}}
\end{table}

\section{Conclusions}{\label{sec6}}

This paper provides a scalable, efficient and robust algorithm framework for fuzzy big data processing by systematically defining the concept of the fuzzy granular-ball, which is available for all fuzzy data processing method. Moreover, the application of this framework reduces the time and space complexity of existing classifiers. This study extends the granular-ball fuzzy set framework to SVM classification computing and proposes the GBFSVM model and GBFSVM model based on the triangular fuzzy number. As shown in the experimental results of GBFSVM, the running time of using fuzzy granular-balls as input is much less than that of points, and its classification accuracy is higher. For datasets with different noise levels, the GBFSVM outperformed SVM in terms of robustness and effectiveness.

Despite the above advantages, there still exist some disadvantages in this paper despite of above advantages. The PSO algorithm cannot ensure a global optimal solution. Due to the limited length of this study and the complexity of the considered problem, we did not use the gradient descent method to optimize the dual model of GBFSVM. Therefore, in the future, we will research how to use the gradient descent method to solve the dual model. Moreover, GBFSVM based on the triangular fuzzy number has not been applied to solve the problem, and more reliable membership functions can be designed for more effective classification. 

%The membership function used in this paper is only a relatively simple distance membership function, and a more reliable membership function can be designed 

\subsection{Acknowledgments}
%\noindent  
This work was supported in part by the National Natural Science Foundation of China under Grant Nos. 62222601, 62221005 and 62176033, Key Cooperation Project of Chongqing Municipal Education Commission under Grant No. HZ2021008, and Natural Science Foundation of Chongqing under Grant No.cstc2019jcyj-cxttX0002.

%\bibliographystyle{abbrv}
%\bibliography{reference}
%\begin{IEEEbiographynophoto}{Jane Doe}
%Biography text here without a photo.
%\end{IEEEbiographynophoto}

%\begin{IEEEbiography}[{\includegraphics[width=1in,height=1.25in,clip,keepaspectratio]{}}]{IEEE Publications Technology Team}
%In this paragraph you can place your educational, professional background and research and other interests.\end{IEEEbiography}

\end{document}